\documentclass{article}

\usepackage[final]{neurips_2023}

\usepackage[utf8]{inputenc} 
\usepackage[T1]{fontenc}    
\usepackage{url}            
\usepackage{booktabs}       
\usepackage{amsfonts}       
\usepackage{nicefrac}       
\usepackage{microtype}      
\usepackage{graphicx}
\usepackage[linesnumbered,ruled,vlined] {algorithm2e}
\usepackage{wrapfig}
\usepackage{mdframed}
\usepackage{amsmath}
\usepackage{amsthm}
\usepackage{booktabs}
\usepackage{amsfonts}
\usepackage{booktabs}

\newcommand{\lf}[1]{{\slshape \texttt{#1}}}

\input{pygment-settings}

\newtheorem{theorem}{Theorem}
\newtheorem{proposition}{Proposition}
\newtheorem{definition}{Definition}
\newtheorem{lemma}{Lemma}
\newenvironment{sproof}{
	\proof}{\endproof}

\newcommand{\mc}[1]{\mathcal{#1}}

\newcommand{\m}{M}
\newcommand{\agent}{\mathcal{A}}

\newcommand{\mysssection}[1]{\noindent\textbf{#1}\hspace{5pt}}

\newcommand{\psq}{q_\emph{PS}}

\newcommand{\psr}{\theta_\emph{PS}}
\newcommand{\tup}[1]{\langle {#1} \rangle}

\usepackage{bbm}
\usepackage[cmtip,all]{xy}
\newcommand{\longsquiggly}{\xymatrix{{}\ar@{~>}[r]&{}}}

\usepackage{siunitx}
\usepackage{makecell}
\usepackage{fancyvrb}
\usepackage{multirow}
\usepackage{booktabs}
\usepackage[table,dvipsnames]{xcolor} 
\usepackage[colorlinks=true,linkcolor=MidnightBlue,anchorcolor=black,citecolor=MidnightBlue,urlcolor=MidnightBlue, pdfborderstyle={/S/U/W 0.5}]{hyperref}

\usepackage{etoolbox}
\makeatletter
\patchcmd{\@algocf@start}
  {-1.5em}
  {0pt}
  {}{}
\makeatother

\newcommand{\name}{Query-based Autonomous Capability Estimation }
\newcommand{\nameAbbr}{QACE }

\input{pygment-settings}

\title{Autonomous Capability Assessment of Sequential Decision-Making Systems in Stochastic Settings\\
{\large(Extended Version)}}

\author{%
    Pulkit Verma, Rushang Karia, {\normalfont and} Siddharth Srivastava\\
    Autonomous Agents and Intelligent Robots Lab,\\
    School of Computing and Augmented Intelligence, \\
    Arizona State University, AZ, USA\\
    \{\texttt{verma.pulkit}, \texttt{rushang.karia}, \texttt{siddharths}\}\texttt{@asu.edu}
}

\begin{document}

\maketitle

\begin{abstract}
It is essential for users to understand what their AI systems can and can't do in order to use them safely. However, the problem of enabling users to assess AI systems with 
sequential decision-making (SDM) capabilities is relatively understudied. This paper presents a new approach for modeling the capabilities of black-box AI systems that can plan and act, along with the possible effects and requirements for executing those capabilities in stochastic settings. We present an active-learning approach that can effectively interact with a black-box SDM system and learn an interpretable probabilistic model describing its capabilities. Theoretical analysis of the approach identifies the conditions under which the learning process is guaranteed to converge to the correct model of the agent; empirical evaluations on different agents and simulated scenarios show that this approach is few-shot generalizable and can effectively describe the capabilities of arbitrary black-box SDM agents in a sample-efficient manner.
\end{abstract}

\section{Introduction}
\label{sec:introduction}

AI systems are becoming increasingly complex, and it is becoming difficult even for AI experts 
to ascertain the limits and capabilities of such systems, as they often use black-box policies for 
their decision-making process~\citep{popov2017data,greydanus_2018_visualizing}. 
E.g., consider an elderly couple with a household robot that learns and adapts to their specific household. 
How would they determine what it can do, what effects their commands would have, and under what conditions? 
Although we are making steady progress on learning for sequential decision-making (SDM), the problem of 
enabling users to understand the limits and capabilities of their SDM systems is largely unaddressed. Moreover, 
as the example above illustrates, the absence of reliable approaches for user-driven capability assessment of 
AI systems limits their inclusivity and real-world deployability. 

This paper presents a new approach for \emph{\name} (\nameAbbr\!) of black-box SDM systems in stochastic settings. Our approach uses a restricted form of interaction with the input SDM agent (referred to as SDMA) to learn a probabilistic model of its capabilities. The learned model captures  high-level user-interpretable capabilities, such as the conditions under which an autonomous vehicle could back out of a garage, or reach a certain target location, along with the probabilities of  possible outcomes of executing each such  capability.   
The resulting learned models directly provide interpretable representations of the scope of SDMA's capabilities. They can also be used to enable and support 
approaches for explaining SDMA's behavior that require closed-form models (e.g., ~\cite{Sreedharan2018HELM}). 
We assume that the input SDMA provides a minimal query-response interface that is already commonly supported by contemporary SDM systems. In particular, SDMA should reveal capability names defining how each of its capabilities can be invoked, and it should be able to accept user-defined instructions in the form of sequences of such capabilities.  These requirements are typically supported by SDM systems by definition. 

The main technical problem for \nameAbbr is to automatically compute ``queries'' in the form of instruction sequences and policies, 
and to learn a probabilistic model for each capability based on SDMA's ``responses'' in the form of executions. Depending on the scenario, these executions can be in the real world, or in a simulator for safety-critical settings. 
Since the set of possible queries of this form is exponential in the state space,  na\"{i}ve approaches for enumerating and selecting useful queries based on information gain metrics are infeasible. 

\mysssection{Main contributions} This paper presents the first approach for for query-based assessment of SDMAs in stochastic settings with minimal assumptions on SDMA internals. In addition, it is also the first approach for reducing query synthesis for SDMA assessment to full-observable non-deterministic (FOND) planning~\citep{Cimatti1998Strong}.
Empirical evaluation shows that these contributions enable our approach to carry out scalable assessment in both embodied and vanilla SDMAs.

We express the learned models using an input concept vocabulary that is known to the target user group. Such vocabularies span multiple tasks and environments. They can be acquired through parallel streams of research on interactive concept acquisition~~\citep{kim15_nips,Lage2020Learning} or explained to users through demonstrations and training~\citep{Schulze2000Andes}. 
These concepts can be modeled as 
binary-valued \emph{predicates} that have their associated evaluation functions~\citep{Mao2022PDSketch}.
We use the syntax and semantics of a well-established relational SDM model representation language, Probabilistic Planning Domain Definition Language (PPDDL)~\citep{Younes_2004_ppddl}, to express the learned models. 

Related work on the problem addresses model learning from passively collected observations of agent behavior~\citep{pasula2007learning,Martinez_2016_learning,juba_22_learning};
and by exploring the state space using simulators~\citep{chitnis2021glib,Mao2022PDSketch}.
However, passive learning approaches can learn incorrect models as they do not have the ability to
generate interventional or counterfactual data;
exploration techniques can be sample inefficient because they 
don't take into account uncertainty and incompleteness in the model being learned to guide their exploration
(see Sec.~\ref{sec:relatedwork} for a greater discussion).

In addition to the key contributions mentioned earlier, our results (Sec.~\ref{sec:evaluation}) show that the approaches for query synthesis in this paper do not place any additional requirements on black-box SDMAs but 
significantly improve the following factors:
(i) convergence rate and sample efficiency for learning relational models of SDMAs with complex capabilities, 
(ii) few-shot generalizability of learned models to larger environments, and 
(iii) accuracy of the learned model w.r.t. the ground truth SDMA capabilities.
convergence rate to the sound and complete model.

\section{Preliminaries}
\label{sec:background}

\mysssection{SDMA setup} We consider SDMAs that operate in stochastic and
fully observable environments. An SDMA can be represented as
a 3-tuple $\tup{\mc{X},\mc{C},\mc{T}}$, where $\mc{X}$ is the environment state
space that the SDMA operates in, $\mc{C}$ is the set of SDMA's capabilities 
(capability names, e.g., ``place object x at 
location y'' or ``arrange table x'')
that the SDMA can execute, and $\mc{T}: \mc{X} \times \mc{C} \rightarrow \mu \mc{X}$ 
is the stochastic black-box transition model determining the effects of
SDMA's capabilities on the environment. 
Here, $\mu \mc{X}$ is the space of 
probability distributions on $\mc{X}$.
Note that the semantics of $\mc{C}$ are not known to the user(s) 
and $\mc{X}$ may not be user-interpretable. 
The only information available about the SDMA is the instruction set in the form 
of capability  names, represented  as $\mc{C}_N$. 
This isn't a restricting assumption 
as the SDMAs must reveal their instruction sets for usability.

\mysssection{Running Example} 
Consider a cafe server robot 
that can 
pick 
and place items like plates, cans, 
etc., from various locations
in the cafe, like the counter, tables, etc., and also move between these 
locations.
A capability \lf{pick-item (?location ?item)} would allow a user to instruct the robot to pick up an item like a soda can
for any location. However, without knowing its description, the user would not know under what conditions the
robot could 
execute this capability and what the effects will be.

\mysssection{Object-centric concept representation} We aim to learn representations that are
generalizable, i.e., the transition dynamics learned should be impervious to environment-specific
properties such as numbers and configurations of objects. 
Additionally, the learned capability models
should hold in different settings of objects in the 
environment as long as the SDMA's capabilities does not change. To this
\begin{wrapfigure}{r}{0.34\textwidth}
    \begin{minipage}{0.34\textwidth}
    \centering
    \frame{\includegraphics*[width=\textwidth]{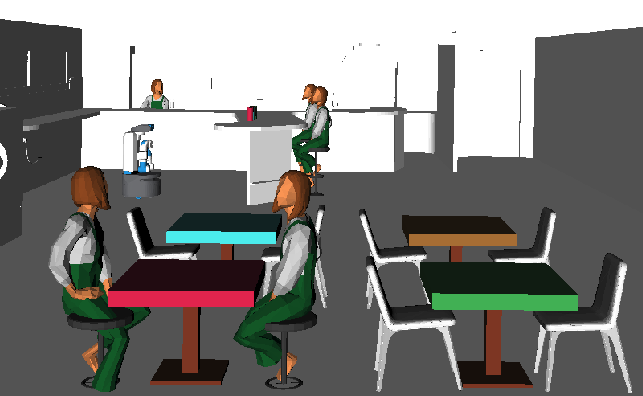}}
    \end{minipage}
    \caption{The cafe server robot environment in OpenRave simulator.}
    \label{fig:cafe_sim}
    \vspace{-0.35cm}
    \end{wrapfigure}
effect, we learn the SDMA's transition model in terms of interpretable
concepts that can be represented using first-order logic \emph{predicates}. This is a 
common formalism for expressing the symbolic models of SDMAs~\citep{zhi2020online,Mao2022PDSketch}.
 We formally represent them using a set of
object-centric predicates $\mc{P}$. 
The set of predicates used for cafe server robot in Fig.~\ref{fig:cafe_sim} can be \lf{(robot-at ?location)},
\lf{(empty-arm)}, \lf{(has-charge)}, \lf{(at ?location ?item)}, and \lf{(holding ?item)}.
Here, \lf{?} precedes an argument that can be replaced by an object in the environment. E.g., \lf{(robot-at tableRed)}
means ``robot is at the red table.'' As mentioned earlier, we assume these predicates along with their 
Boolean evaluation functions (which evaluate to true if predicate is true in a state)
are available as input. 
Learning such predicates is also an interesting but orthogonal direction of 
research~\citep{Mao2022PDSketch,sreedharan22_iclr,Das2023State2Explanation}.

\mysssection{Abstraction} Using an object-centric predicate representation
induces an abstraction of environment states $\mc{X}$ to high-level
logical states $\mc{S}$ expressible in predicate vocabulary $\mc{P}$.  
This abstraction can be formalized using a surjective 
function $f: \mc{X} \rightarrow \mc{S}$. 
E.g.,
in the cafe server robot, the concrete state $x$ may refer to
roll, pitch, and yaw values.
On the other hand, the abstract state $s$  corresponding to $x$ will consist of truth values
of all the predicates~\citep{srivastava2014combined,Srivastava_16_metaphysics,Mao2022PDSketch}.

\begin{wrapfigure}{r}{0.4\textwidth}
    \vspace{-0.55cm}
    \begin{minipage}{0.4\textwidth}
    \small
    \begin{mdframed}[backgroundcolor=white,innerleftmargin=-0.8cm,linecolor=white]
    \begin{Verbatim}[commandchars=\\\{\}]
    {(}\PY{k}{:capability}\PY{+w}{ }\PY{n+nx}{pick\PYZhy{}item}
    \PY{+w}{ }\PY{k}{:parameters}\PY{+w}{ }\PY{p}{(}\PY{n+nc}{?location}\PY{+w}{ }\PY{n+nc}{?item}\PY{p}{)}
    \PY{+w}{ }\PY{k}{:precondition}\PY{+w}{ }\PY{p}{(}\PY{n+nb}{and}
    \PY{+w}{   }\PY{p}{(}\PY{n+nx}{empty\PYZhy{}arm}\PY{p}{)}\PY{+w}{ }\PY{p}{(}\PY{n+nx}{has\PYZhy{}charge}\PY{p}{)}
    \PY{+w}{   }\PY{p}{(}\PY{n+nx}{robot\PYZhy{}at}\PY{+w}{ }\PY{n+nc}{?location}\PY{p}{)}
    \PY{+w}{   }\PY{p}{(}\PY{n+nx}{at}\PY{+w}{ }\PY{n+nc}{?location}\PY{+w}{ }\PY{n+nc}{?item}\PY{p}{)}\PY{p}{)}
    \PY{+w}{ }\PY{k}{:effect}\PY{+w}{ }\PY{p}{(}\PY{n+nb}{and}\PY{+w}{ }\PY{p}{(}\PY{n+nx}{probabilistic}
    \PY{+w}{   }\PY{l+m+mf}{0.7}\PY{+w}{ }\PY{p}{(}\PY{n+nb}{and}\PY{+w}{ }\PY{p}{(}\PY{n+nb}{not}\PY{+w}{ }\PY{p}{(}\PY{n+nx}{empty\PYZhy{}arm}\PY{p}{)}\PY{p}{)}
    \PY{+w}{        }\PY{p}{(}\PY{n+nb}{not}\PY{+w}{ }\PY{p}{(}\PY{n+nx}{at}\PY{+w}{ }\PY{n+nc}{?location}\PY{+w}{ }\PY{n+nc}{?item}\PY{p}{)}\PY{p}{)}\PY{+w}{ }
    \PY{+w}{        }\PY{p}{(}\PY{n+nx}{holding}\PY{+w}{ }\PY{n+nc}{?item}\PY{p}{)}\PY{p}{)}
    \PY{+w}{   }\PY{l+m+mf}{0.2}\PY{+w}{ }\PY{p}{(}\PY{n+nb}{and}\PY{+w}{ }\PY{p}{(}\PY{n+nb}{not}\PY{+w}{ }\PY{p}{(}\PY{n+nx}{has\PYZhy{}charge}\PY{p}{)}\PY{p}{)}\PY{p}{)}
    \PY{+w}{   }\PY{l+m+mf}{0.1}\PY{+w}{ }\PY{p}{(}\PY{n+nb}{and}\PY{p}{)}\PY{p}{)}\PY{p}{)}\PY{k}{          #No-change}
    \end{Verbatim}
    \vspace{-0.2cm}
    \end{mdframed}
    \end{minipage}
    \caption{PPDDL description for the cafe server robot's \emph{pick-item} capability.}
    \label{fig:ppddl}
    \vspace{-0.2cm}
    \end{wrapfigure}

\mysssection{Probabilistic transition model} 
Abstraction induces an abstract transition model 
$\mc{T}': \mc{S} \times \mc{C} \rightarrow \mu \mc{S}$, where $\mu \mc{S}$ is the space of 
probability distributions on $\mc{S}$. This is done by converting each transition 
$\tup{x,c,x'} \in \mc{T}$ to  $\tup{s,c,s'} \in \mc{T}'$ using predicate evaluators such that $f(x) = s$ and $f(x') = s'$.
Now, $\mc{T}'$ can be expressed as model $M$ that is a set of parameterized action (capability in our case) schema, where each
$c \in \mc{C}$ is described as $c = \tup{\emph{name(c)}, \emph{pre(c)}, \emph{eff(c)}}$,
where $\emph{name}(c) \in \mc{C}_N$ refers to name and arguments (parameters) of $c$; $\emph{pre(c)}$ refers to 
the preconditions of the capability $c$ represented as a conjunctive formula defined over $\mc{P}$ that must be true
in a state to execute $c$; and $\emph{eff(c)}$ refers to the set of conjunctive formulas over $\mc{P}$, each of which becomes true
on executing $c$ with an associated probability. 
The result of executing $c$ for a model $M$ is a state $c(s)= s'$
such that $P_{M}(s'|s,c)>0$ and one (and only one) of the effects of $c$ becomes true in $s'$.
We also use $\tup{s,c,s'}$ triplet to refer to $c(s)=s'$.
This representation
is similar to the Probabilistic Planning Domain Definition Language (PPDDL), which can compactly
describe the SDMA's capabilities. 
E.g., the cafe server robot 
has three capabilities (shown here as \lf{name(args)}): \lf{pick-item(?location ?item)};  \lf{place-item(?location ?item)}; and 
\lf{move(?source ?destination)}. The description of \lf{pick-item} in PPDDL is shown in 
Fig.~\ref{fig:ppddl}. 

\mysssection{Variational Distance} 
\label{sec:vd}
Given a black-box SDMA $\agent$, we learn the probabilistic model $M$
representing its capabilities. To measure how close $M$ is to the true SDMA transition model $\mc{T}'$,
we use variational distance -- a standard measure in probabilistic-model learning 
literature~\citep{pasula2007learning,Martinez_2016_learning,ng2019incremental,chitnis2021glib}.
It is based on the \textit{total variation 
distance} between two probability distributions $\mc{T}'$ and $M$,
given as:
\begin{equation}
\label{eq:vd}
\small
\delta(\mc{T}', M) =  \frac{1}{|\mathcal{D}|}\sum_{\langle s, c, s' \rangle \in \mathcal{D}} \big| P_{\mathcal{T}'}(s'|s, c) - P_{M}(s'|s, c)\big|
\end{equation}
where $\mathcal{D}$ is the set of test samples ($\tup{s,c,s'}$ triplets) that we generate using $\mc{T}'$ to measure the
accuracy of our approach.
As shown by \citet{Pinsker1964}, 
$\delta(\mc{T}', M) \leq \sqrt{0.5\times D_\emph{KL}(\mc{T}' \!\parallel\! M)}$, where $D_\emph{KL}$ is the KL divergence.

\section{The Capability Assessment Task}
\label{sec:problem}

In this work, we aim to learn a
probabilistic transition model $\mc{T}'$ of a
black-box SDMA as a model $M$,
given a set of user-interpretable concepts as predicates $\mc{P}$ along with their evaluation functions, and the
capability names $C_N$ corresponding to the SDMA's capabilities.
Formally, the assessment task is:
\begin{definition}
Given a set of predicates $\mc{P}$ along with their Boolean evaluation functions, capability names $\mc{C}_N$, and a 
black-box SDMA $\agent$ in a fully observable, stochastic, 
and static environment, the
\emph{capability assessment task} $\tup{\agent,\mc{P},\mc{C}_N,\mc{T}'}$ is defined as the task of
learning the probabilistic transition model $\mc{T}'$ of the SDMA $\agent$ expressed using $\mc{P}$.
\end{definition}

The solution to this task is a model $M$ that should ideally be the same as $\mc{T}'$
for correctness. In practice, $\mc{T}'$ need not be in PPDDL, so the correctness should be evaluated along multiple
dimensions.
 
\mysssection{Notions of model correctness}
\label{sec:correctness} 
As discussed in Sec.~\ref{sec:background}, variational distance is one way to capture the
correctness of the learned model. This is useful when the learned model and the SDMA's model
are not in the same representation.
The correctness of a model can also be measured using qualitative properties such as soundness and completeness.
The learned model $M$ should be sound and complete w.r.t. the SDMA's high-level model $\mc{T}'$, i.e., 
for all combinations of $c$, $s$, and $s'$, if a transition $\tup{s,c,s'}$ is possible according to $\mc{T}'$, then it
should also be possible under $M$, and vice versa. Here, $\tup{s,c,s'}$ is consistent with $M$ (or $\mc{T}'$) if $P(s'|s,c) >0 $ according to $M$ (or $\mc{T}'$).
We formally define this as:

\begin{definition}
\label{def:soundeness}
    Let $\tup{\agent,\mc{P},\mc{C}_N,\mc{T}}$ be a capability assessment task with a learned model $M$ as its solution. 
    $M$ is \emph{sound} iff each transition $\tup{s,c,s'}$ consistent with $M$ is also consistent with $\mc{T}'$.
    $M$ is \emph{complete} iff every transition that is consistent with $\mc{T}'$ is also consistent with $M$. 
\end{definition}

This also means that if $\mc{T}'$ is also a PPDDL model, then (i) any precondition 
or effect learned as part of $M$ is also present in $\mc{T}'$ (soundness), and; 
(ii) all the preconditions and effects present in $\mc{T}'$ should be present in $M$ (completeness).
Additionally, a probabilistic model is \emph{correct} if it is sound and complete, and the probabilities for each effect set in each
of its capabilities are the same as that of $\mc{T}'$. 

\section{Interactive Capability Assessment}
\label{sec:solution}
To solve the capability assessment task, we must identify 
the preconditions and effects of each capability 
in terms of conjunctive formulae expressed over $\mc{P}$. 
At a very high-level, we do this
by identifying that a probabilistic model can be expressed as a set of capabilities $c \in C$, each of which
has two places where we can add a predicate $p$, namely precondition
and effect. We call these \emph{locations} within each capability.
We then enumerate through these $2\times|\mc{C}|$ locations and
figure out the correct form of each predicate at each of those locations.
To do this we need to consider three forms: (i) adding it as $p$, i.e., the predicate
must be true for that capability to execute (when the location is precondition), or
it becomes true on executing it (when the location is effect); (ii) adding it as
$\emph{not(p)}$, i.e., the predicate
must be false for that capability to execute (when the location is precondition), or
it becomes false on executing it (when the location is effect); (iii) not adding it at all, i.e., the capability execution does not depend on it (when the location is precondition), or the capability
does not modify it (when the location is effect).

\mysssection{Model pruning}
Let $\mc{M}$ represent the set of all possible transition models expressible in terms of
$\mc{P}$ and $\mc{C}$. 
We must prune the set of possible models to solve the capability assessment task, ideally bringing it to a singleton. 
We achieve this by posing queries to the SDMA and using the responses to the queries as data to eliminate the inconsistent models from the set of possible models $\mc{M}$.

Given a location (precondition or effect in a capability), the set of models corresponding to a predicate will consist of 3 transition models: one each corresponding to the three ways we can add the predicate in that location. We call these three possible models
$M_T$, $M_F$, $M_I$, corresponding to adding $p$ (true), $\emph{not(p)}$ (false), and not adding $p$ (ignored), respectively at that location.

Note that the actual set of possible transition models is infinite due to the probabilities associated with each transition. To simplify this, we first 
constrain the set of possible models by ignoring the probabilities, and learn a non-deterministic transition model 
(commonly referred to as a FOND model~\citep{Cimatti1998Strong})
instead of a probabilistic one. We later learn the probabilities using
maximum likelihood estimation based on the transitions observed as part of the query responses.

\mysssection{Simulator use} Using the standard assumption of a simulator's availability in research on SDM, \nameAbbr solves the capability assessment task (Sec.~\ref{sec:problem}) by issuing queries to the SDMA and observing its responses in the form of its execution in the simulator. In non-safety-critical scenarios, this approach can
work without a simulator too.
The interface required to answer the queries is rudimentary as the SDMA $\agent$ need not have access to its transition model $\mc{T}'$ (or $\mc{T}$). Rather, it should be able to interact with the
environment (or a simulator) to answer the queries. 
We next present the types of queries we use, followed by algorithms for generating them and for inferring the SDMA's model using its responses to the queries.

\mysssection{Policy simulation queries ($Q_\emph{PS}$)} 
These queries ask the SDMA $\agent$ to execute a given policy multiple times. More precisely, a $Q_\emph{PS}$ query is a tuple $\tup{s_I, \pi, G, \alpha, \eta}$ where $s_I \in \mc{S}$ is a state, 
$\pi$ is a partial policy that maps each reachable state to a capability, $G$ is a logical predicate formula that expresses a stopping condition, $\alpha$ is an execution cutoff bound representing the maximum number of execution steps, and $\eta$ is an attempt limit. 
Note that the query (including the policy) is created entirely by our solution approach without any interaction with the SDMA.
$Q_\emph{PS}$ queries ask $\agent$ to execute $\pi$, $\eta$ times. In each iteration, execution continues until either the stopping goal condition $G$ or the execution bound $\alpha$ is reached. 
E.g., ``Given that the robot, \lf{soda-can}, \lf{plate1}, \lf{bowl3} are at \lf{table4}, 
what will happen if the robot follows the following policy: if there is an item on the table and arm is empty, pick up the item; if an item is in the hand and location is not dishwasher, move to the dishwasher; if an item is in the hand and location is dishwasher, place the item in the dishwasher?'' Such queries will be used
to learn both preconditions and 
effects (Sec.~\ref{sec:learningmodels}). 

A response to such queries 
is an execution in the simulator and $\eta$ traces of these simulator executions. 
Formally, the response $\psr$ for a query $\psq \in Q_\emph{PS}$ is a tuple $\langle b, \zeta \rangle$, where $b \in \{\top,\bot\}$
indicates weather if the SDMA reached a goal state $s_G \models G$, and 
$\zeta$ are the corresponding triplets $\tup{s,c,s'}$ generated during the $\eta$ policy executions. If the SDMA reaches $s_G$ even once during the $\eta$ simulations, $b$ is $\top$, representing that the goal can be reached using this policy. 
Next, we discuss how these responses are used to prune the set of possible models and learn the correct transition model of the SDMA.

\subsection{\name (QACE) Algorithm}

  \begin{wrapfigure}{r}{0.535\textwidth}
  \vspace{-0.5cm}
    \begin{minipage}{0.535\textwidth}
        \IncMargin{0.8em}
\begin{algorithm}[H]
    \DontPrintSemicolon
    \SetKwInOut{Input}{Input}
    \SetKwInOut{Output}{Output}
    \caption{\nameAbbr Algorithm}
    \label{alg:qace}
    \Input{\,\,\,predicates $\mc{P}$; capability names $\mc{C}_N$; \\\,\,\,state $s$; SDMA $\agent$; hyperparameters $\alpha, \eta$;\\\,\,\, FOND Planner $\rho$}
    \Output{\,\,\,$M$}
   
    $L \leftarrow \{\emph{pre}, \emph{eff}\} \times \mc{C}_N$\;
    $ M^* \gets$ initializeModel~($\mc{P},\mc{C}_N$)\;
    \For{\emph{each} $\langle l,p\rangle \in\langle L, \mc{P}\rangle$}{
        Generate $M_T,M_F,M_I$ by setting $p$ at $l$ in $M^*$ \;
        \For{\emph{each pair} $M_i, M_j$ \emph{in} $\{M_T,M_F,M_I\}$}{
            $q \leftarrow$ generateQuery$(M_i, M_j, \alpha, \eta, s, \rho$)\;
            $\theta_{\agent}, \mathbb{S} \leftarrow$ getResponse$(q, \agent, s)\,\,\,$ \;
            $M^* \leftarrow$ pruneModels ($\theta_\agent, M_i, M_j$) \;
            $M^* \leftarrow$   learn possible stochastic effects of capability with $c_N$ in $l$ using $\zeta$ (in $\theta_\agent$)\;
            }   
    }
    $M \gets$ learnProbabilitiesOfStochasticEffects($\zeta, M^*$) \;
    \vspace*{-0.15in}
    \Return $M$\;
\end{algorithm}
\end{minipage}
  \vspace{-0.6cm}
\end{wrapfigure}

We now discuss how we solve the capability assessment task using the \name algorithm (Alg.~\ref{alg:qace}), which works in two phases.
In the first phase, 
\nameAbbr learns all capabilities' preconditions and non-deterministic effects using the policy simulation queries (Sec.~\ref{sec:query_synthesis}). 
In the second phase, 
\nameAbbr converts the
non-deterministic effects of capabilities into probabilistic effects (Sec.~\ref{sec:learningmodels}). We now explain the
learning portion (lines 3-11) in detail.

\nameAbbr first initializes a model $M^*$ over the predicates in $\mc{P}$ with capabilities having names $c_N \in \mc{C}_N$.
All the preconditions and effects for all capabilities are empty in this initial model.
\nameAbbr uses $M^*$ to maintain the current partially learned model.
\nameAbbr iterates over all combinations of
$L$ and $\mc{P}$ (line 4). 
For each pair, \nameAbbr creates 3 candidate models $M_T$, $M_F$, and $M_I$ as mentioned earlier.
It then takes 2 of these (line 5) and generates a query $q$ (line 6) such that 
responses to the query $q$ from the two models are logically inconsistent
 (see Sec.~\ref{sec:query_synthesis}).
The query $q$ is then posed to the SDMA $\agent$ whose response is stored as $\theta_\agent$ (line 7). \nameAbbr finally prunes at least one of the two models by comparing their responses (which are logically inconsistent) with the response $\theta_\agent$ of the SDMA
on that query (line 8). \nameAbbr also updates the effects of all models in the set of possible models to speed up the learning process (line 9). Finally, it learns the probabilities of the observed stochastic effects using maximum likelihood estimation (line 10).
An important feature of the algorithm (similar to PLEX~\citep{Mehta2011Autonomous} and AIA~\citep{verma2021asking}) is that it keeps track of all the locations where it
hasn't identified the correct way of adding a predicate. The next section presents our approach for generating the queries in line 6.

\subsection{Algorithms for Query Synthesis}
\label{sec:query_synthesis}
One of the main challenges in interactive model learning is to generate the queries we discussed above and to learn the agent's model using them. 
Although active learning~\citep{settles2012ActiveLearning} addresses the related problem of figuring out which data sets to request labels for, vanilla active learning 
approaches are not directly applicable here because
the possible set of queries expressible using the literals in a domain is vast: it is the set of all policies expressible using $\mc{C}_N$. 
Query-based learning approaches use an estimate of the utility of a query to select ``good'' queries. This can be a multi-valued measure
like \emph{information gain}~\citep{Sollich1994Learning}, \emph{value}~\citep{macke2021expected}, etc.
or a binary-valued attribute like \emph{distinguishability}~\citep{verma2021asking}, etc.
We use distinguishability
as a measure to identify useful queries. According to it,
a query $q$ is distinguishing w.r.t. two models if
 responses by both models to $q$ are logically inconsistent.
We now discuss methods for generating such queries.

\mysssection{Generating distinguishing queries}
\nameAbbr automates the generation of queries using search. As part of the
algorithm, a model $M^*$ is used to generate the three possible 
models $M_T, M_F,$ and $M_I$
for a specific predicate $p$ and location $l$ combination.
Other than the predicate $p$ at location $l$, these models are exactly the same.
A forward search is used to generate the policy simulation queries with two possible models
$M_i, M_j$ chosen randomly from $M_T$, $M_F$, and $M_I$.
The forward search is initiated with 
an initial state $\tup{s_0^i,s_0^j}$ as the root of the search tree, where $s_0^i$ and $s_0^j$
are copies of the same state $s_0$ from which we are starting the search. The edges of the tree correspond to
the capabilities with arguments replaced with objects in the environment. Nodes
correspond to the two states resulting from applying the capability in the parent state
according to the two possible models.
E.g., consider that a transition $\tup{s_0^i,c,s_1^i}$ is possible according to the model $M_i$, and let 
$\tup{s_0^j,c,s_1^j}$ be the corresponding transition (by applying the same effect set of $c$ as $h_i$) according to the model $M_j$. 
Now there will be an edge in the forward search tree
with label $c$ such that parent node is $\tup{s_0^i,s_0^j}$ and child node is $\tup{s_1^i,s_1^j}$. 
The search process terminates when a node $\tup{s^i,s^j}$ is reached such that either the states $s^i$ and $s^j$
don't match, or the preconditions of the same capability were met in the state according to one of the possible models but not
according to the other. 
Vanilla forward search scales poorly with the number of capabilities and objects in the environment.
To address this we reduce the problem to a fully observable non deterministic (FOND) planning problem. This can be solved by any FOND planner.
The output of this search is a policy $\pi$
to reach a state where 
the two models, $M_i$ and $M_j$ predict different outcomes.
Additional details about the reduction and an example of the output policy are available in 
Appendix~\ref{app:psq_example}. 
The query $\tup{s_I, \pi, G, \alpha, \eta}$ resulting from this search is such that
$s_I$ is set to the initial state $s_0$, $\pi$ is the output policy, $G$ is the goal state where the models' responses doesn't match, $\alpha$ and $\eta$ are hyperparameters as mentioned earlier. We next see how to use these queries
to prune out the incorrect models.

\subsection{Learning Probabilistic Models Using Query Responses}
\label{sec:learningmodels}
At this point, \nameAbbr already has a query such that the response to the query by the two possible models
does not match. We next see how to prune out the model inconsistent with the SDMA.
\nameAbbr poses the query generated earlier to
the SDMA and gets its response (line 7 in Alg.~\ref{alg:qace}). If the SDMA can successfully execute the policy, \nameAbbr compares the response of the two models with that of the SDMA and prunes out the model
whose response does not match with that of the SDMA. If the SDMA cannot execute the policy, i.e., 
SDMA fails to execute some capability in the policy, then the models cannot be pruned directly.
In such a case, a new initial state $s_0$ must be chosen to generate a new query starting from that initial state.
Since generating new queries for the same pair of models can be time consuming, we preempt this
 issue by creating a pool of states  $\mathbb{S}$
that can execute the capabilities using a directed exploration of the state space with the current partially learned model as discussed below.

\mysssection{Directed exploration}
 A partially learned model is a model where one or more capabilities have been learned (the correct preconditions have been identified for each capability and at least one effect is learned). Once we have such a model, we can do a directed exploration of the state space for these capabilities by only executing a learned capability if the preconditions are satisfied. This helps in reducing the sample complexity since the simulator is only called when we know that the capability will execute successfully, thereby allowing us to explore different parts of the state space efficiently. If a capability's preconditions are not learned, all of its groundings might need to be executed from the state.
In the worst case, to escape local minima where no models can be pruned, we would need to perform a randomized search for a state where a capability is executable by the SDMA. In practice, we observed that using directed exploration to generate a pool of states gives at least one grounded capability instance. This helps ensure that during query generation, the approach does not spend a long time searching for a state where a capability is executable.

\mysssection{Learning probabilities of stochastic effects}
After \nameAbbr learns the non-deterministic model, to learn the probabilities of
the learned effects it uses the transitions
collected as part of responses to queries. This is done using Maximum Likelihood Estimation 
(MLE)~\citep{Fisher1922MLE}. For each triplet $\tup{s,c,s'}$ seen in the 
collected data, let $\emph{count}_c$ be the number of times a capability
$c$ is observed. Now, for each effect set, the probability of that effect set
becoming true on executing that capability $c$ is given as the number of times
that effect is observed on executing $c$ divided by $\emph{count}_c$.
As we increase the value of the hyperparameter $\eta$, we increase the number 
of collected triplets, thereby improving the probability values calculated using this approach.

\section{Theoretical Analysis and Correctness}
\label{sec:theory_results}

We now discuss how the model $M$ of SDMA $\agent$ learned using \nameAbbr 
fulfills the notions of correctness (Sec.~\ref{sec:correctness}) discussed earlier.
We first show that the model $M^*$ learned before line 10 of \nameAbbr (Alg.~\ref{alg:qace}) is sound
and complete according to Def.~\ref{def:soundeness}. The proofs for the theorems are available in 
Appendix~\ref{appendix:proofs}.

\begin{theorem}
Let $\agent$ be a black-box SDMA with a ground truth transition model $\mc{T}'$ expressible in terms of predicates $\mc{P}$ and a set of capabilities $\mc{C}$. Let $M^*$ be the non-deterministic model expressed in terms of predicates $\mc{P}^*$ and capabilities $\mc{C}$, and learned using the query-based autonomous capability estimation algorithm (Alg.~\ref{alg:qace}) just before line 10. Let $C_N$ be a set of capability names corresponding to capabilities $\mc{C}$. If $\mc{P}^* \subseteq \mc{P}$, then the model $M^*$ is \emph{sound} w.r.t. the SDMA transition model $\mc{T}'$. Additionally, if $\mc{P}^* = \mc{P}$, then the model $M^*$ is \emph{complete} w.r.t. the SDMA transition model $\mc{T}'$.
\end{theorem}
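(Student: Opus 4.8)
The plan is to lift both claims to a single structural invariant of the learning loop (lines 3--9 of Alg.~\ref{alg:qace}): at termination, for every location $l \in L$ and every predicate $p \in \mc{P}^*$, the model $M^*$ places $p$ at $l$ in exactly the form---as a positive literal, a negative literal, or absent---in which $p$ occurs at $l$ in the ground-truth model $\mc{T}'$. Using the paper's own characterization of soundness and completeness in terms of learned literals (Sec.~\ref{sec:correctness}: a sound $M^*$ contains only precondition/effect literals present in $\mc{T}'$, a complete $M^*$ contains all of them), both halves of the theorem reduce to this invariant together with the two hypotheses on $\mc{P}^*$.

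I would establish the invariant by induction over iterations of the outer loop, with the inductive step carried by the distinguishing-query machinery. Fix $\langle l,p\rangle$ and consider the three candidates $M_T,M_F,M_I$, which differ only in how $p$ appears at $l$; exactly one of them agrees with $\mc{T}'$ at $l$. For each pair $M_i,M_j$ drawn in line 5, the query produced in line 6 is distinguishing, so $M_i$ and $M_j$ return logically inconsistent responses on it. Because the SDMA response $\theta_{\agent}$ obtained in line 7 is generated by $\mc{T}'$ itself, it can match at most one of the two, and line 8 prunes the one that disagrees with $\mc{T}'$; running over all three pairs leaves precisely the candidate consistent with $\mc{T}'$ at $\langle l,p\rangle$. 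Since distinct locations and predicates are independent slots of a PPDDL schema, resolving them one at a time does not disturb previously fixed slots, so the invariant is maintained.

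Granting the invariant, \textbf{soundness} follows from $\mc{P}^* \subseteq \mc{P}$: every literal appearing in a precondition or effect of $M^*$ is over some $p \in \mc{P}^* \subseteq \mc{P}$ and, by the invariant, occurs in the same form in $\mc{T}'$; hence $M^*$ contains no literal absent from $\mc{T}'$, which is exactly soundness in the sense of Def.~\ref{def:soundeness}. The containment $\mc{P}^* \subseteq \mc{P}$ is what forbids $M^*$ from committing to a literal over a predicate foreign to $\mc{T}'$, which could never be matched and would break soundness. \textbf{Completeness} then follows from the stronger hypothesis $\mc{P}^* = \mc{P}$: every literal of $\mc{T}'$ is over a predicate in $\mc{P} = \mc{P}^*$ and therefore corresponds to some slot $\langle l,p\rangle$ processed by the loop; by the invariant $M^*$ carries that literal in the same form, so $M^*$ contains every literal of $\mc{T}'$. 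The equality is essential: if some $p \in \mc{P}\setminus\mc{P}^*$ were never processed, a genuine precondition or effect of $\mc{T}'$ over $p$ would be unexpressible in $M^*$, defeating completeness.

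The main obstacle is discharging the two facts the inductive step takes for granted: (i) that a distinguishing query between any relevant pair $M_i,M_j$ really exists and is returned by the FOND reduction of Sec.~\ref{sec:query_synthesis}, and (ii) that the SDMA can be driven to a state in which this query is executable, so that $\theta_{\agent}$ is actually informative and pruning can fire. For (i) I would argue that since $M_i$ and $M_j$ differ at a single slot, there is a reachable state at which the associated capability yields either mismatched preconditions or divergent successors, and that the forward-search / FOND encoding is complete for extracting a witnessing policy $\pi$. For (ii) I would lean on the directed-exploration fallback: when the SDMA fails to execute $\pi$, a fresh initial state from the pool $\mathbb{S}$ is selected, and one must show this retry loop terminates at an executable, still-distinguishing query---otherwise a slot could be left unresolved and the invariant would stall. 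Making this executability-and-termination argument airtight, rather than the comparatively mechanical literal-counting for soundness and completeness, is where the real work lies.
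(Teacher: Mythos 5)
Your proposal is correct and takes essentially the same route as the paper: the paper's proof likewise resolves each $\langle l,p\rangle$ slot by pairwise pruning of $M_T,M_F,M_I$ against the SDMA's response (its Cases 1 and 2 over precondition and effect locations, the latter using $\eta$ repeated executions and identifiable effects), and derives completeness exactly from the exhaustive loop over all predicate--location pairs. The two obligations you flag as the real work are also precisely the ones the paper discharges separately --- existence of a distinguishing query via Proposition~1 and the strong-solution lemma for the FOND encoding, and executability via random traces or the state pool $\mathbb{S}$.
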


Next, we show that the final step of learning the probabilities for all the effects in each capability
converges to the correct probability distribution under the assumption that all the effects
of a capability are identifiable. When a capability $c$ is executed in the environment, one of its effects $e_i(c) \in \emph{eff}(c)$ will be observed in the environment.
To learn the correct probability distribution in $M$, we should accurately identify that effect $e_i(c)$.
Hence, the set of effects is \emph{identifiable} if at least one state exists in the
environment from which each effect can be uniquely identified when
the capability is executed.
An example of this is available in 
Appendix~\ref{appendix:ident_effects}.

\begin{theorem}
  Let $\agent$ be a black-box SDMA with a ground truth transition model $\mc{T}'$ expressible in terms of predicates $\mc{P}$ and a set of capabilities $\mc{C}$. Let $M$ be the probabilistic model expressed in terms of predicates $\mc{P}^*$ and capabilities $\mc{C}$, and learned using the query-based autonomous capability estimation algorithm (Alg.~\ref{alg:qace}). Let $\mc{P} = \mc{P}^*$ and $M$ be generated using a sound and complete non-deterministic model $M^*$ in line 11 of Alg.~\ref{alg:qace}, and let all effects of each capability $c \in \mc{C}$ be identifiable. The model $M$ is \emph{correct} w.r.t. the model  $\mc{T}'$ in the limit as $\eta$ tends to $\infty$, where $\eta$ is hyperparameter in query $Q_\emph{PS}$ used in Alg.~\ref{alg:qace}.
\end{theorem}

\section{Empirical Evaluation}
\label{sec:evaluation}

We implemented Alg.~\ref{alg:qace} in Python to evaluate our approach empirically.\footnote{Source code available at \url{https://github.com/AAIR-lab/QACE}}
We found that our query synthesis and interactive learning process leads to 
(i) few shot generalization; 
(ii) convergence to a sound and complete model; and
(iii) much greater 
sample efficiency and 
accuracy for learning lifted SDM models with complex capabilities as compared to the baseline.

\begin{figure*}[t]
\centering
    \includegraphics[width=\textwidth]{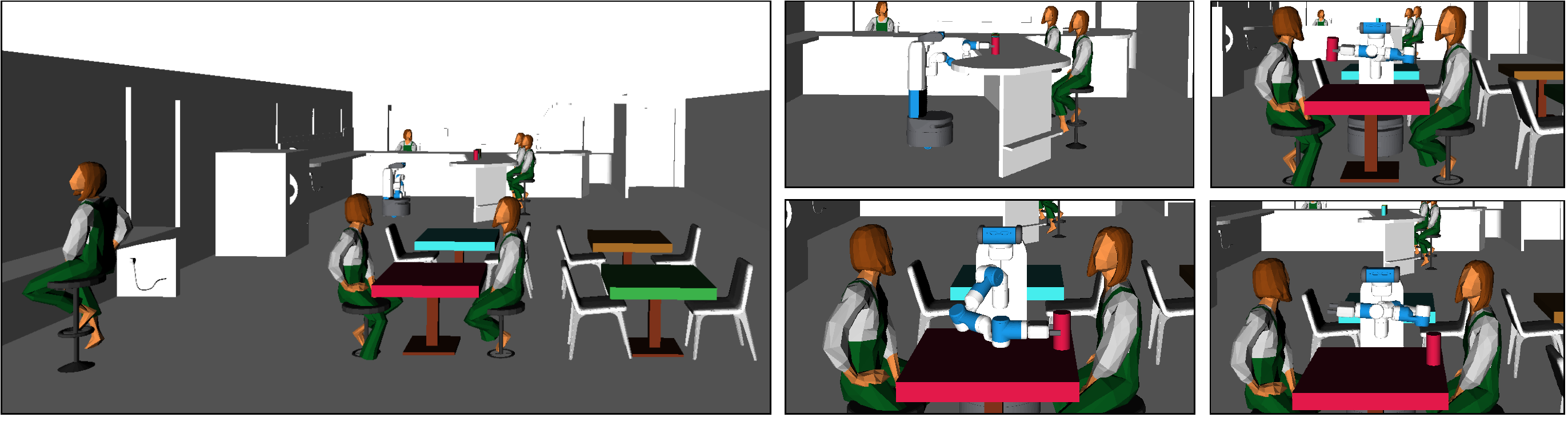}
    \caption{Screen captures from the Cafe Server Robot simulation. The complete environment is shown in the image on the left. The image grid on the right shows screen captures of multiple steps of the robot delivering a \lf{soda-can} to a table.}
    \label{fig:cafe_snaps}
\end{figure*}

\mysssection{Setup} We used a \emph{single} training problem with few objects ($\le7$) for all methods in our evaluation and used a test set that was composed of problems containing object counts larger than those in the training set.
We ran the experiments
on a cluster of Intel Xeon E5-2680 v4 CPUs with CentOS 7.9 running at 2.4 GHz with a memory limit of 8 GB and a time limit of 4 hours. 
We used PRP~\citep{Muise2012PRP} as the FOND planner to generate the queries (line 6 in Alg.~\ref{alg:qace}).
For QACE, we used $\alpha=2d$ where $d$ is the maximum depth of policies used in queries generated by \nameAbbr and $\eta=5$. All of the methods in our empirical evaluation receive the same training and test sets and are evaluated on the same platform. 
We used Variational Distance (VD) as presented in Eq.~\ref{eq:vd} to evaluate the quality of the learned SDMA models.

\mysssection{Baseline selection}
We used the closest SOTA related work, GLIB~\citep{chitnis2021glib} as a baseline. GLIB learns a probabilistic model of an intrinsically motivated agent by sampling
goals far away from the initial state and making the agent try to reach them. 
This can be adapted to an assessment setting by moving goal-generation based sampling outside the agent, and, to the best of our knowledge, no existing approach addresses the problem of creating intelligent questions for an SDMA.
GLIB has two versions, GLIB-G, which learns the model as a set of grounded noisy deictic rules (NDRs)~\citep{pasula2007learning}, and GLIB-L, which learns the model as a set of lifted NDRs. We used the same hyperparameters as published for the \textit{Warehouse Robot} and \textit{Driving Agent} and performed extensive tuning for the others and report results with the best performing settings.

The models learned using GLIB cannot be used to calculate the variational distance presented in Eq.~\ref{eq:vd} because for each capability GLIB learns a set of NDRs rather than a unique NDR. In order to maintain parity in comparison, we use GLIB's setup to calculate an approximation of the VD. Using it, we sample 3500 random transitions $\langle s, c, s'\rangle$ from the ground truth transition model $\mc{T}'$ using problems in the test set to compute a dataset of transitions $\mathcal{D}$. 
The sample-based, approximate VD is then given as:
$\frac{1}{|\mc{D}| } \sum_{d \in \mc{D}} \mathbbm{1}_{[s' \neq c_M(s)]}$, where $c_M(s)$ samples the transition using the capability in the learned model output by each method.
In Fig.~\ref*{fig:plots}, we compare the approximate variational distance of the three approaches w.r.t. $\mathcal{D}$ as we increase the learning time. 
Note that we also evaluated VD for QACE using Eq.~\ref{eq:vd} and found that $\delta(\mc{T}', M) \approx 0$ for our learned model $M$ in all SDMA settings.

\mysssection{SDMAs for evaluation} To test the efficacy of our approach, we created SDMAs for five different settings including one task and motion planning agent and several SDMAs based on state-of-the-art stochastic planning systems from the literature: 
\emph{Cafe Server Robot} is a Fetch robot~\citep{wise16_fetch} that uses the ATM-MDP task and motion planning system~\citep{shah2020icra}  to  plan and act in
a restaurant environment to serve food, clear tables, etc.; \emph{Warehouse Robot} is a robot that can stack, unstack, 
and manage the boxes in a warehouse; a \emph{Driving Agent} that can drive between locations and can repair the vehicle at certain locations;
a \emph{First Responder Robot} that can assist in emergency scenarios by driving to emergency spots, providing first-aid and water to victims, etc.; and an \emph{Elevator Control Agent} that can control the operation of multiple elevators in a building.

Additional details about each setting and are available in 
Appendix~\ref{appendix:domains}.

\subsection{Results}
We present an analysis of our approach on all of the SDMAs listed above. We also present a comparative analysis with the baseline on all  SDMAs except the Cafe Server Robot, whose task and motion planning system was not compatible with the baseline. 

\mysssection{Cafe Server Robot} This SDMA setup uses an 8 degrees of freedom Fetch~\citep{wise16_fetch} robot in a cafe setting on OpenRave simulator~\citep{Diankov_2008_openrave}. The low-level environment state consists of 
continuous x, y, z, roll, pitch, and yaw values of all objects in the environment. The predicate evaluators were
provided by ATM-MDP of which we used only a subset to learn a PPDDL model. Each robot capability is refined into
motion controls at run-time depending on the configuration of the objects in the environment. The results for
variational distance between the learned model and the ground truth model in Fig.~\ref{fig:cafe_sim_result} show that despite the different vocabulary,
\nameAbbr learns an accurate transition model for the SDMA. 

We now discuss the comparative performance of QACE with the baseline across the four baseline-compatible SDMAs presented above. 

\begin{wrapfigure}{r}{0.35\textwidth}
    \vspace{-0.1cm}
    \begin{minipage}{0.35\textwidth}
    \includegraphics*[width=\textwidth]{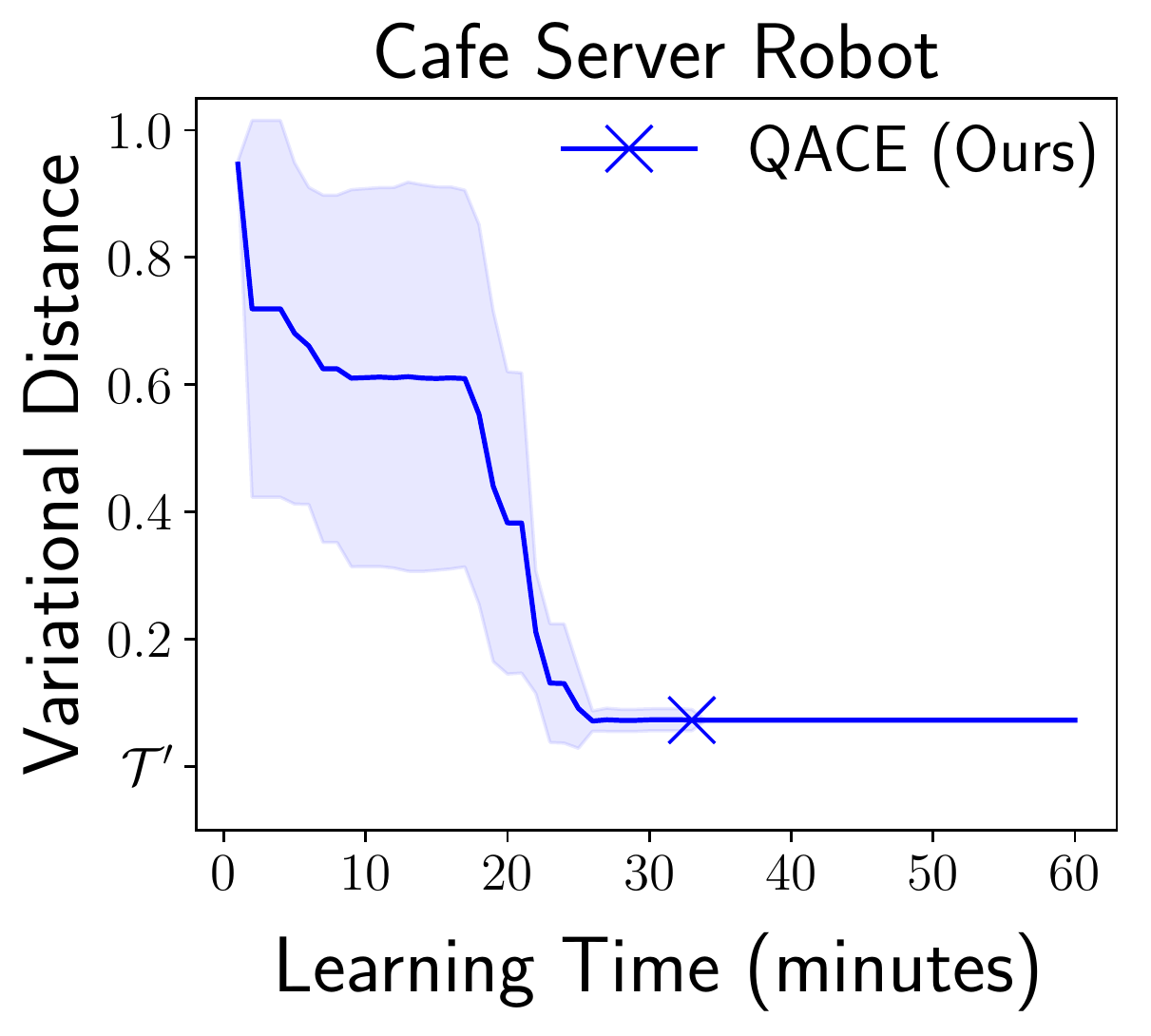}
    \end{minipage}
    \caption{Variational Distance between the learned model and the ground truth with increasing time for QACE for Cafe Server Robot. {\color{blue} $\mathbf{\times}$} shows that the learning process ended
    at that time instance.}
    \label{fig:cafe_sim_result}
    \vspace{-0.25cm}
\end{wrapfigure}

\mysssection{Faster convergence} The time taken for \nameAbbr to learn the final model is much lower than 
that of GLIB for three of the four SDMAs. This is because trace collection by \nameAbbr is more directed and hence ends up
learning the correct model in \emph{a shorter time}. The only setup where GLIB marginally outperforms \nameAbbr is Warehouse Robot, and this happens because this SDMA has just two capabilities, one of which is deterministic. Hence, GLIB can easily learn their configuration from a few observed traces. For SDMAs with complex and much larger number of capabilities -- First Responder Robot and Elevator Control Agent -- GLIB finds it more challenging to learn the model that is closer to the ground truth transition model. Additionally, \nameAbbr takes much fewer samples to learn the model than the baselines. In all settings, \nameAbbr is much more \emph{sample efficient} than the baselines as \nameAbbr needed at most 4\% of the samples needed by GLIB-G to reach the variational distance that it plateaued at. In contrast, GLIB-L started timing out only after processing a few samples for complex SDMAs.

\mysssection{Few-shot generalization} To ensure that learned models are not overfitted, our test set contains problems with larger quantities of objects than those used during training.  As seen in Fig.~\ref{fig:plots}, 
the baselines have higher variational distance from the ground truth model for complex SDMA setups as compared to \nameAbbr.
This shows \emph{better few-shot generalization} of \nameAbbr compared to the baselines.

\begin{figure*}[t]
\centering
    \includegraphics[width=\textwidth]{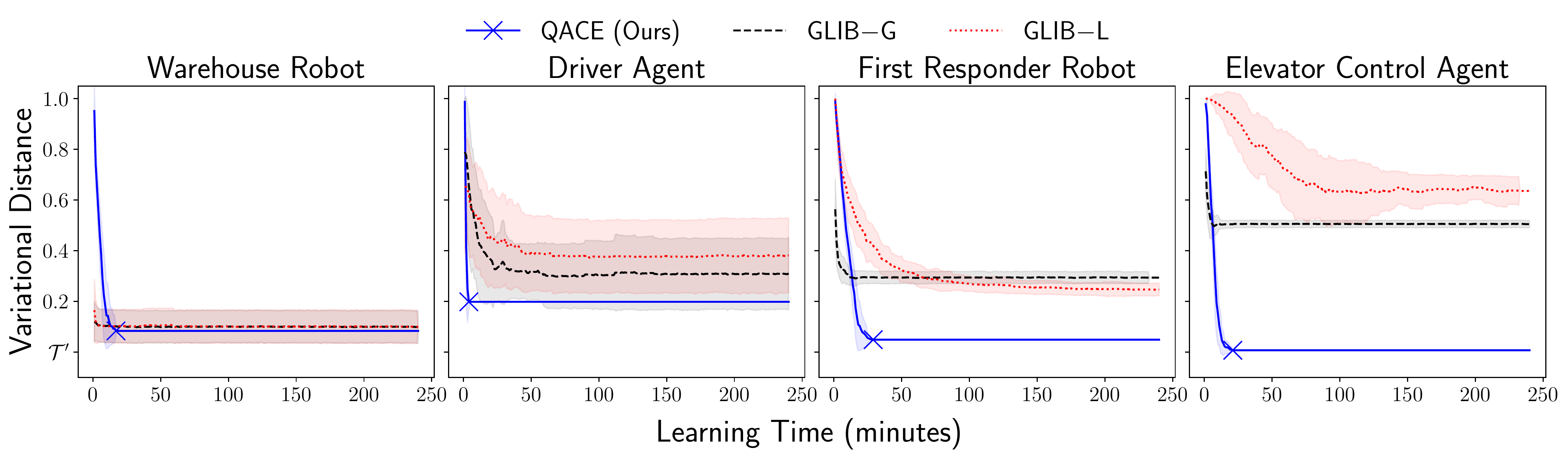}
    \caption{ 
    A comparison of the approximate variational distance as a factor of the learning time for the three methods: \nameAbbr (ours), GLIB-G, and GLIB-L (lower values better). {\color{blue} $\mathbf{\times}$} shows that the learning process ended
    at that time instance for \nameAbbr\!.
    The results were calculated using 30 runs per method per domain. Solid lines are averages across runs, and shaded portions show the standard deviation. $\mathcal{T}'$ is the ground truth model. Detailed results are available in Appendix~\ref{appendix:results}.}
    \label{fig:plots}
\end{figure*}

\section{Related Work}
\label{sec:relatedwork}

The problem of learning probabilistic relational agent models from a given set of observations has been well studied~\citep{pasula2007learning,mourao_2012_learning,Martinez_2016_learning,juba_22_learning}. 
\citet{jiminez_2012_review} and \citet{arora_2018_review} present comprehensive reviews of such approaches. 
We next discuss the closest related research directions.

\mysssection{Passive learning} Several methods learn a probabilistic
model of the agent and environment from a given set of agent executions. 
\citet{pasula2007learning}    
learn the models in the form of noisy deictic rules (NDRs) where an action
can correspond to multiple NDRs and also model noise.
\citet{mourao_2012_learning}
learn such operators using action classifiers to predict the effects
of an action.
\citet{Rodrigues_2011_incremental} learn non-deterministic models
as a collection of rule sets and learn these rule sets incrementally.
They take a bound on the number of rules as input.
\citet{juba_22_learning} provide a theoretical framework to learn
safe probabilistic models with a range of probabilities for each
probabilistic effect while assuming that each effect is atomic and
independent of others. A common issue with such approaches is that they
are susceptible to incorrect and sometimes inefficient model learning
as they cannot control the input data used for learning or carry out interventions required for accurate learning.

\mysssection{Sampling of transitions} 
Several approaches learn operator descriptions by exploring the state space in the restricted setting of deterministic models~\citep{ng2019incremental,Jin2022Creativity}.
A few reinforcement learning approaches have 
been developed for learning the relational
probabilistic action model by exploring the state space
using pre-determined criteria  
to generate better samples~\citep{ng2019incremental}. 
\citet{konidaris_18_from} explore learning
PPDDL models for planning, but they aim to learn
the high-level symbols needed to describe a set 
of input low-level options, and these symbols are not interpretable.
GLIB~\citep{chitnis2021glib} also learns probabilistic relational models using goal sampling
as a heuristic for generating relevant data, whereas we use active querying using guided forward search
for this. Our empirical analysis shows that our approach of synthesising  
queries yield greater sample efficiency and correctness profiles than 
the goal generation used in this approach.

\mysssection{Active learning} Several active learning 
approaches
learn automata representing a system's model~\citep{Angulin1988Queries,Aarts2012Automata,Tang2013Active,vaandrager_2017_model}. These approaches assume access to a teacher (or an oracle)
that can determine whether the learned automaton is correct and provide a counterexample if it is incorrect. This is not possible 
in the black-box SDMA settings that constitute the focus of this work.

\section{Conclusion}
\label{sec:conclusion}

In this work, we presented an approach for learning
a probabilistic model of an agent using interactive querying.
We showed that the approach is few-shot generalizable to larger environments and learns a sound and complete model faster than state-of-the-art approaches
in a sample-efficient manner.

QACE describes the capabilities of the robot in terms of predicates that the user understands (this includes novice users as well as more advanced users like engineers). Understanding the limits of the capabilities of the robot can help with the safe usage of the robot, and allow better utilization of the capabilities of the robot. Indirectly, this can reduce costs since the robot manufacturer need not consider all possible environments that the robot may possibly operate in. The use of our system can also be extended to formal verification of SDMAs.

QACE can also be used by standard explanation generators as they need an agent’s model. Such models are hard to obtain (as we also illustrate in this paper) and our approach can be used to compile them when they are not available to start with.

\mysssection{Limitations and Future Work}
In this work, we assume that the agent can be connected to a simulator to answer the queries. In some real-world settings, 
this assumption may be limiting as users might not have direct access to such a simulator. 
Formalizing the conditions under which is it safe to ask the
queries directly to the agent in the real-world is a promising direction for the future work.
Additionally, in this work, we assume the availability of the instruction set of the SDMA as input in the form of capability names.
In certain settings, it might be useful to discover the capabilities of an evolving SDMA using methods proposed by \citet{nayyar2022differential} and \citet{verma2022discovering}.

\acksection

We thank Jayesh Nagpal for his help with setting up the Cafe Server Robot SDMA. 
We also thank anonymous reviewers for their valuable feedback and suggestions.
This work was supported by the ONR under grants N00014-21-1-2045 and 
N00014-23-1-2416.

{
\bibliographystyle{plainnat}
\bibliography{assessment}

\begin{thebibliography}{47}
\providecommand{\natexlab}[1]{#1}
\providecommand{\url}[1]{\texttt{#1}}
\expandafter\ifx\csname urlstyle\endcsname\relax
  \providecommand{\doi}[1]{doi: #1}\else
  \providecommand{\doi}{doi: \begingroup \urlstyle{rm}\Url}\fi

\bibitem[Aarts et~al.(2012)Aarts, Heidarian, Kuppens, Olsen, and Vaandrager]{Aarts2012Automata}
Fides Aarts, Faranak Heidarian, Harco Kuppens, Petur Olsen, and Frits Vaandrager.
\newblock Automata learning through counterexample guided abstraction refinement.
\newblock In \emph{Proceedings of the 18th International Symposium on Formal Methods}, 2012.

\bibitem[Angluin(1988)]{Angulin1988Queries}
Dana Angluin.
\newblock Queries and concept learning.
\newblock \emph{Machine Learning}, 2\penalty0 (4):\penalty0 319–342, 1988.

\bibitem[Arora et~al.(2018)Arora, Fiorino, Pellier, Métivier, and Pesty]{arora_2018_review}
Ankuj Arora, Humbert Fiorino, Damien Pellier, Marc Métivier, and Sylvie Pesty.
\newblock A review of learning planning action models.
\newblock \emph{Knowledge Engineering Review}, 33:\penalty0 E20, 2018.

\bibitem[Bonet and Givan(2005)]{Bonet20055th}
Blai Bonet and Robert Givan.
\newblock 5th {International} planning competition: {N}on-deterministic track -- call for participation, 2005.

\bibitem[Bryce and Buffet(2008)]{Bryce20086th}
Daniel Bryce and Olivier Buffet.
\newblock 6th {International Planning Competition: Uncertainty Part}.
\newblock In \emph{Proceedings of the 6th International Planning Competition}, 2008.

\bibitem[Chitnis et~al.(2021)Chitnis, Silver, Tenenbaum, Kaelbling, and Lozano{-}P{\'{e}}rez]{chitnis2021glib}
Rohan Chitnis, Tom Silver, Joshua Tenenbaum, Leslie~Pack Kaelbling, and Tom{\'{a}}s Lozano{-}P{\'{e}}rez.
\newblock {GLIB: E}fficient exploration for relational model-based reinforcement learning via goal-literal babbling.
\newblock In \emph{Proceedings of the 35th AAAI Conference on Artificial Intelligence}, 2021.

\bibitem[Cimatti et~al.(1998)Cimatti, Roveri, and Traverso]{Cimatti1998Strong}
Alessandro Cimatti, Marco Roveri, and Paolo Traverso.
\newblock Strong planning in non-deterministic domains via model checking.
\newblock In \emph{Proceedings of the 4th International Conference on Artificial Intelligence Planning Systems}, 1998.

\bibitem[Daniele et~al.(1999)Daniele, Traverso, and Vardi]{Daniele1999Strong}
Marco Daniele, Paolo Traverso, and Moshe~Y. Vardi.
\newblock Strong cyclic planning revisited.
\newblock In \emph{Proceedings of the 5th European Conference on Planning}, 1999.

\bibitem[Das et~al.(2023)Das, Chernova, and Kim]{Das2023State2Explanation}
Devleena Das, Sonia Chernova, and Been Kim.
\newblock {State2Explanation: C}oncept-based explanations to benefit agent learning and user understanding.
\newblock In \emph{Advances in Neural Information Processing Systems}, 2023.

\bibitem[Diankov and Kuffner(2008)]{Diankov_2008_openrave}
Rosen Diankov and James Kuffner.
\newblock {OpenRAVE: A Planning Architecture for Autonomous Robotics}.
\newblock Technical Report CMU-RI-TR-08-34, Carnegie Mellon University, USA, July 2008.

\bibitem[Fisher(1922)]{Fisher1922MLE}
Ronald~A. Fisher.
\newblock On the mathematical foundations of theoretical statistics.
\newblock \emph{Philosophical Transactions of the Royal Society of London, Series A}, 222\penalty0 (594-604):\penalty0 309--368, 1922.

\bibitem[Greydanus et~al.(2018)Greydanus, Koul, Dodge, and Fern]{greydanus_2018_visualizing}
Samuel Greydanus, Anurag Koul, Jonathan Dodge, and Alan Fern.
\newblock Visualizing and understanding {A}tari agents.
\newblock In \emph{Proceedings of the 35th International Conference on Machine Learning}, 2018.

\bibitem[Jiménez et~al.(2012)Jiménez, De~La~Rosa, Fernández, Fernández, and Borrajo]{jiminez_2012_review}
Sergio Jiménez, Tomás De~La~Rosa, Susana Fernández, Fernando Fernández, and Daniel Borrajo.
\newblock A review of machine learning for automated planning.
\newblock \emph{Knowledge Engineering Review}, 27\penalty0 (4):\penalty0 433–467, 2012.

\bibitem[Jin et~al.(2022)Jin, Ma, Jin, Zhuo, Chen, and Yu]{Jin2022Creativity}
Mu~Jin, Zhihao Ma, Kebing Jin, Hankz~H. Zhuo, Chen Chen, and Chao Yu.
\newblock Creativity of {AI: Automatic} symbolic option discovery for facilitating deep reinforcement learning.
\newblock In \emph{Proceedings of the 36th AAAI Conference on Artificial Intelligence}, 2022.

\bibitem[Juba and Stern(2022)]{juba_22_learning}
Brendan Juba and Roni Stern.
\newblock Learning probably approximately complete and safe action models for stochastic worlds.
\newblock In \emph{Proceedings of the 36th AAAI Conference on Artificial Intelligence}, 2022.

\bibitem[Kiefer and Wolfowitz(1956)]{Kiefer1956Consistency}
Jack Kiefer and Jacob Wolfowitz.
\newblock Consistency of the maximum likelihood estimator in the presence of infinitely many incidental parameters.
\newblock \emph{The Annals of Mathematical Statistics}, pages 887--906, 1956.

\bibitem[Kim et~al.(2015)Kim, Shah, and Doshi-Velez]{kim15_nips}
Been Kim, Julie~A. Shah, and Finale Doshi-Velez.
\newblock Mind the gap: A generative approach to interpretable feature selection and extraction.
\newblock In \emph{Advances in Neural Information Processing Systems}, 2015.

\bibitem[Konidaris et~al.(2018)Konidaris, Kaelbling, and Lozano{-}P{\'{e}}rez]{konidaris_18_from}
George Konidaris, Leslie~Pack Kaelbling, and Tom{\'{a}}s Lozano{-}P{\'{e}}rez.
\newblock From skills to symbols: {Learning} symbolic representations for abstract high-level planning.
\newblock \emph{Journal of Artificial Intelligence Research}, 61\penalty0 (1):\penalty0 215–289, January 2018.

\bibitem[Lage and Doshi-Velez(2020)]{Lage2020Learning}
Isaac Lage and Finale Doshi-Velez.
\newblock Learning interpretable concept-based models with human feedback.
\newblock In \emph{ICML Workshop on Human Interpretability in Machine Learning}, 2020.

\bibitem[Macke et~al.(2021)Macke, Mirsky, and Stone]{macke2021expected}
William Macke, Reuth Mirsky, and Peter Stone.
\newblock Expected value of communication for planning in ad hoc teamwork.
\newblock In \emph{Proceedings of the 35th AAAI Conference on Artificial Intelligence}, 2021.

\bibitem[Mao et~al.(2022)Mao, Lozano{-}P{\'{e}}rez, Tenenbaum, and Kaelbing]{Mao2022PDSketch}
Jiayuan Mao, Tom{\'{a}}s Lozano{-}P{\'{e}}rez, Joshua~B. Tenenbaum, and Leslie~Pack Kaelbing.
\newblock {PDSketch: I}ntegrated domain programming, learning, and planning.
\newblock In \emph{Advances in Neural Information Processing Systems}, 2022.

\bibitem[Mart\'{i}nez et~al.(2016)Mart\'{i}nez, Aleny\`{a}, Torras, Ribeiro, and Inoue]{Martinez_2016_learning}
David Mart\'{i}nez, Guillem Aleny\`{a}, Carme Torras, Tony Ribeiro, and Katsumi Inoue.
\newblock Learning relational dynamics of stochastic domains for planning.
\newblock In \emph{Proceedings of the 26th International Conference on Automated Planning and Scheduling}, 2016.

\bibitem[Mehta et~al.(2011)Mehta, Tadepalli, and Fern]{Mehta2011Autonomous}
Neville Mehta, Prasad Tadepalli, and Alan Fern.
\newblock Autonomous learning of action models for planning.
\newblock In \emph{Advances in Neural Information Processing Systems}, 2011.

\bibitem[Mour\~{a}o et~al.(2012)Mour\~{a}o, Zettlemoyer, Petrick, and Steedman]{mourao_2012_learning}
Kira Mour\~{a}o, Luke Zettlemoyer, Ronald P.~A. Petrick, and Mark Steedman.
\newblock Learning {STRIPS} operators from noisy and incomplete observations.
\newblock In \emph{Proceedings of the 28th Conference on Uncertainity in Artificial Intelligence}, 2012.

\bibitem[Muise et~al.(2012)Muise, McIlraith, and Beck]{Muise2012PRP}
Christian Muise, Sheila McIlraith, and Christopher Beck.
\newblock Improved non-deterministic planning by exploiting state relevance.
\newblock In \emph{Proceedings of the 22nd International Conference on Automated Planning and Scheduling}, 2012.

\bibitem[Nayyar et~al.(2022)Nayyar, Verma, and Srivastava]{nayyar2022differential}
Rashmeet~Kaur Nayyar, Pulkit Verma, and Siddharth Srivastava.
\newblock Differential assessment of black-box {AI} agents.
\newblock In \emph{Proceedings of the 36th AAAI Conference on Artificial Intelligence}, 2022.

\bibitem[Ng and Petrick(2019)]{ng2019incremental}
Jun H.~A. Ng and Ronald P.~A. Petrick.
\newblock Incremental learning of planning actions in model-based reinforcement learning.
\newblock In \emph{Proceedings of the 28th International Joint Conference on Artificial Intelligence}, 2019.

\bibitem[Pacharoen et~al.(2013)Pacharoen, Aoki, Bhattarakosol, and Surarerks]{Tang2013Active}
Warawoot Pacharoen, Toshiaki Aoki, Pattarasinee Bhattarakosol, and Athasit Surarerks.
\newblock Active learning of nondeterministic finite state machines.
\newblock \emph{Mathematical Problems in Engineering}, page 373265, 2013.

\bibitem[Pasula et~al.(2007)Pasula, Zettlemoyer, and Kaelbling]{pasula2007learning}
Hanna~M. Pasula, Luke~S. Zettlemoyer, and Leslie~Pack Kaelbling.
\newblock Learning symbolic models of stochastic domains.
\newblock \emph{Journal of Artificial Intelligence Research}, 29:\penalty0 309--352, 2007.

\bibitem[Pinsker(1964)]{Pinsker1964}
Mark~S. Pinsker.
\newblock \emph{Information and Information Stability of Random Variables and Processes}.
\newblock Holden-Day, Inc., 1964.
\newblock ISBN 0816268045.

\bibitem[Popov et~al.(2017)Popov, Heess, Lillicrap, Hafner, Barth-Maron, Vecerik, Lampe, Tassa, Erez, and Riedmiller]{popov2017data}
Ivaylo Popov, Nicolas Heess, Timothy Lillicrap, Roland Hafner, Gabriel Barth-Maron, Matej Vecerik, Thomas Lampe, Yuval Tassa, Tom Erez, and Martin Riedmiller.
\newblock Data-efficient deep reinforcement learning for dexterous manipulation.
\newblock \emph{arXiv:1704.03073}, 2017.

\bibitem[Rodrigues et~al.(2011)Rodrigues, G{\'e}rard, and Rouveirol]{Rodrigues_2011_incremental}
Christophe Rodrigues, Pierre G{\'e}rard, and C{\'e}line Rouveirol.
\newblock Incremental learning of relational action models in noisy environments.
\newblock In \emph{Proceedings of the 21th International Conference on Inductive Logic Programming}, 2011.

\bibitem[Schulze et~al.(2000)Schulze, Shelby, Treacy, Wintersgill, VanLehn, and Gertner]{Schulze2000Andes}
Kay~G. Schulze, Robert~N. Shelby, Donald~J. Treacy, Mary~C. Wintersgill, Kurt VanLehn, and Abigail Gertner.
\newblock Andes: {An} active learning, intelligent tutoring system for {Newtonian Physics}.
\newblock \emph{Themes in Education}, 1\penalty0 (2):\penalty0 115--136, 2000.

\bibitem[Settles(2012)]{settles2012ActiveLearning}
Burr Settles.
\newblock \emph{Active Learning}.
\newblock Morgan \& Claypool Publishers, 2012.
\newblock ISBN 1608457257.

\bibitem[Shah et~al.(2020)Shah, {Kala Vasudevan}, Kumar, Kamojjhala, and Srivastava]{shah2020icra}
Naman Shah, Deepak {Kala Vasudevan}, Kislay Kumar, Pranav Kamojjhala, and Siddharth Srivastava.
\newblock Anytime integrated task and motion policies for stochastic environments.
\newblock In \emph{IEEE International Conference on Robotics and Automation}, 2020.

\bibitem[Sollich and Saad(1994)]{Sollich1994Learning}
Peter Sollich and David Saad.
\newblock Learning from queries for maximum information gain in imperfectly learnable problems.
\newblock In \emph{Advances in Neural Information Processing Systems}, 1994.

\bibitem[Sreedharan et~al.(2018)Sreedharan, Srivastava, and Kambhampati]{Sreedharan2018HELM}
Sarath Sreedharan, Siddharth Srivastava, and Subbarao Kambhampati.
\newblock Hierarchical expertise level modeling for user specific contrastive explanations.
\newblock In \emph{Proceedings of the 27th International Joint Conference on Artificial Intelligence}, 2018.

\bibitem[Sreedharan et~al.(2022)Sreedharan, Soni, Verma, Srivastava, and Kambhampati]{sreedharan22_iclr}
Sarath Sreedharan, Utkarsh Soni, Mudit Verma, Siddharth Srivastava, and Subbarao Kambhampati.
\newblock Bridging the gap: {P}roviding post-hoc symbolic explanations for sequential decision-making problems with inscrutable representations.
\newblock In \emph{Proceedings of the 10th International Conference on Learning Representations}, 2022.

\bibitem[Srivastava et~al.(2014)Srivastava, Fang, Riano, Chitnis, Russell, and Abbeel]{srivastava2014combined}
Siddharth Srivastava, Eugene Fang, Lorenzo Riano, Rohan Chitnis, Stuart Russell, and Pieter Abbeel.
\newblock Anytime integrated task and motion policies for stochastic environments.
\newblock In \emph{IEEE International Conference on Robotics and Automation}, 2014.

\bibitem[Srivastava et~al.(2016)Srivastava, Russell, and Pinto]{Srivastava_16_metaphysics}
Siddharth Srivastava, Stuart Russell, and Alessandro Pinto.
\newblock Metaphysics of planning domain descriptions.
\newblock In \emph{Proceedings of the 30th AAAI Conference on Artificial Intelligence}, 2016.

\bibitem[Vaandrager(2017)]{vaandrager_2017_model}
Frits Vaandrager.
\newblock Model learning.
\newblock \emph{Communications of the ACM}, 60\penalty0 (2):\penalty0 86–95, 1 2017.

\bibitem[Verma et~al.(2021)Verma, Marpally, and Srivastava]{verma2021asking}
Pulkit Verma, Shashank~Rao Marpally, and Siddharth Srivastava.
\newblock Asking the right questions: {Learning} interpretable action models through query answering.
\newblock In \emph{Proceedings of the 35th AAAI Conference on Artificial Intelligence}, 2021.

\bibitem[Verma et~al.(2022)Verma, Marpally, and Srivastava]{verma2022discovering}
Pulkit Verma, Shashank~Rao Marpally, and Siddharth Srivastava.
\newblock Discovering user-interpretable capabilities of black-box planning agents.
\newblock In \emph{Proceedings of the 19th International Conference on Principles of Knowledge Representation and Reasoning}, 2022.

\bibitem[Wise et~al.(2016)Wise, Ferguson, King, Diehr, and Dymesich]{wise16_fetch}
Melonee Wise, Michael Ferguson, Derek King, Eric Diehr, and David Dymesich.
\newblock Fetch and freight: Standard platforms for service robot applications.
\newblock In \emph{IJCAI Workshop on Autonomous Mobile Service Robots}, 2016.

\bibitem[Younes and Littman(2004)]{Younes_2004_ppddl}
H\r{a}kan L.~S. Younes and Michael~L. Littman.
\newblock {PPDDL 1.0: An} extension to {PDDL} for expressing domains with probabilistic effects.
\newblock Technical Report CMU-CS-04-167, Carnegie Mellon University, USA, October 2004.

\bibitem[Younes et~al.(2005)Younes, Littman, Weissman, and Asmuth]{younes_2005_first_IPPC}
H\r{a}kan L.~S. Younes, Michael~L. Littman, David Weissman, and John Asmuth.
\newblock The first probabilistic track of the {International Planning Competition}.
\newblock \emph{Journal of Artificial Intelligence Research}, 24:\penalty0 851–887, 2005.

\bibitem[Zhi-Xuan et~al.(2020)Zhi-Xuan, Mann, Silver, Tenenbaum, and Mansinghka]{zhi2020online}
Tan Zhi-Xuan, Jordyn Mann, Tom Silver, Josh Tenenbaum, and Vikash Mansinghka.
\newblock Online bayesian goal inference for boundedly-rational planning agents.
\newblock In \emph{Advances in Neural Information Processing Systems}, 2020.

\end{thebibliography}
}

\clearpage
\appendix

\section{SDM Setups -- Additional Information}
\label{appendix:domains}

\begin{wraptable}{r}{0.41\textwidth}
    \vspace{-1.35cm}
    \begin{minipage}{0.41\textwidth}
    \rowcolors{2}{gray!13}{}
     \begin{tabular}{l *{2}{c} }    
    \toprule
    \textbf{SDM Setup} & $\mathbf{|\mc{P}|}$ & $\mathbf{|\mc{C}_N|}$  \\
    \midrule
       Cafe Server Robot & 5 & 4  \\
       Warehouse Robot & 8 & 4  \\
       Driver Agent & 4 & 2 \\
       First Responder Robot & 13 & 10 \\
       Elevator Control Agent & 12 & 10 \\
    \bottomrule
    \end{tabular}
    \caption{Size of the SDM setups in terms of number of predicates and capabilities.}
    \label{tab:size}
\end{minipage}
\vspace{-0.45cm}
\end{wraptable}

We used five SDM setups for our experiments, and Tab.~\ref{tab:size} shows their size in terms of number of predicates and capabilities. Description for the cafe server robot is available in Sec.~\ref{sec:evaluation}.
Short descriptions of the other four SDM settings are presented below:

\mysssection{Warehouse Robot}
This SDM setup is implemented using the SOTA stochastic planning system used in the planning literature. This is motivated from 
\textit{Exploding Blocksworld} setup introduced in the probabilistic track of International Planning Competition (IPC) 2004~\citep{younes_2005_first_IPPC}.
It features 
a robot that has four capabilities: \lf{stack}, \lf{unstack}, \lf{pick}, and \lf{place}.
\textit{stack} capability stacks one object on top of another, \lf{unstack} capability 
removes an object from top of another object, \lf{pick} capability picks up 
an object from a fixed location, and \lf{place} capability places the object
at a fixed location. The setup is non-deterministic as executing some of these capabilities can destroy the object as they might be delicate. Hence even the ground truth does not have 100\% success rate in this setup.

\mysssection{Driver Agent}
This SDM setup is implemented using the SOTA stochastic planning system used in the planning literature. This is motivated from 
\textit{Tireworld} setup introduced in the probabilistic track of IPC 2004~\citep{younes_2005_first_IPPC}.
It consists of a robot moving around multiple locations. The \lf{move-vehicle} capability that takes the SDMA from one location to another can also cause
it to get a flat-tire with some non-zero probability. Not all locations have the option to change tire, but if available,
a \lf{change-tire} capability will fix the flat-tire with a 100\% probability.

\mysssection{First Responder Robot}
This SDM setup is inspired from \textit{First Responders} in uncertainty track of IPC 2008~\citep{Bryce20086th}.
The setup features two kinds of emergencies: fire and medical, involving hurt victims. Victims can be treated at the site of an emergency or the hospital.
This was originally a FOND setup, and we added probabilities to all the capabilities with non-deterministic effects to make it probabilistic. The responder vehicles can also be driven from one place to another and can be loaded and unloaded with fire or medical kits.
The recovery status depending on the treatment location, is different with different probabilities. 

\mysssection{Elevator Control Agent}
This SDM setup is motivated from \textit{Elevators} in the probabilistic track of IPC 2006~\citep{Bonet20055th}.
It consists of an agent managing multiple elevators on multiple floors in a single building. The capabilities of moving from one elevator to another on the same floor are
probabilistic. The size of this setup is much larger than the previous three. Also, the capabilities have arities of up to five, making this setup complex
from an assessment point of view.

\section{Extended Preliminaries}

\subsection{Fully Observable Non-Deterministic (FOND) Model}
\begin{wrapfigure}{r}{0.42\textwidth}
    \vspace{-0.55cm}
    \begin{minipage}{0.42\textwidth}
    \begin{mdframed}[backgroundcolor=white,innerleftmargin=-0.8cm,linecolor=white]
    \begin{Verbatim}[commandchars=\\\{\}]
    {(}\PY{k}{:action}\PY{+w}{ }\PY{n+nx}{pick\PYZhy{}item}
    \PY{+w}{ }\PY{k}{:parameters}\PY{+w}{ }\PY{p}{(}\PY{n+nc}{?location}\PY{+w}{ }\PY{n+nc}{?item}\PY{p}{)}
    \PY{+w}{ }\PY{k}{:precondition}\PY{+w}{ }\PY{p}{(}\PY{n+nb}{and}
    \PY{+w}{   }\PY{p}{(}\PY{n+nx}{empty\PYZhy{}arm}\PY{p}{)}\PY{+w}{ }\PY{p}{(}\PY{n+nx}{has\PYZhy{}charge}\PY{p}{)}
    \PY{+w}{   }\PY{p}{(}\PY{n+nx}{robot\PYZhy{}at}\PY{+w}{ }\PY{n+nc}{?location}\PY{p}{)}
    \PY{+w}{   }\PY{p}{(}\PY{n+nx}{at}\PY{+w}{ }\PY{n+nc}{?location}\PY{+w}{ }\PY{n+nc}{?item}\PY{p}{)}\PY{p}{)}
    \PY{+w}{ }\PY{k}{:effect}\PY{+w}{ }\PY{p}{(}\PY{n+nb}{oneof}
    \PY{+w}{   }\PY{p}{(}\PY{n+nb}{and}\PY{+w}{ }\PY{p}{(}\PY{n+nb}{not}\PY{+w}{ }\PY{p}{(}\PY{n+nx}{empty\PYZhy{}arm}\PY{p}{)}\PY{p}{)}
    \PY{+w}{      }\PY{p}{(}\PY{n+nb}{not}\PY{+w}{ }\PY{p}{(}\PY{n+nx}{at}\PY{+w}{ }\PY{n+nc}{?location}\PY{+w}{ }\PY{n+nc}{?item}\PY{p}{)}\PY{p}{)}\PY{+w}{ }
    \PY{+w}{      }\PY{p}{(}\PY{n+nx}{holding}\PY{+w}{ }\PY{n+nc}{?item}\PY{p}{)}\PY{p}{)}
    \PY{+w}{   }\PY{p}{(}\PY{n+nb}{and}\PY{+w}{ }\PY{p}{(}\PY{n+nb}{not}\PY{+w}{ }\PY{p}{(}\PY{n+nx}{has\PYZhy{}charge}\PY{p}{)}\PY{p}{)}\PY{p}{)}\PY{p}{)}
    \end{Verbatim}
    \vspace{-0.2cm}
    \end{mdframed}
    \end{minipage}
    \caption{FOND description for the \emph{pick-item} capability of the cafe server robot.}
    \label{fig:fond}
    \vspace{-0.2cm}
\end{wrapfigure}    

A fully-observable
non-deterministic (FOND) planning model~\citep{Cimatti1998Strong} can be viewed as
a probabilistic planning model without the probabilities
associated with each effect pair. 
On executing an action, one of its possible effects 
is chosen. 
The solution to these planning models is a  
\emph{partial policy} $\Pi: S \rightarrow A$ that maps
each state to an action that the agent should execute in that state.
As shown by \citet{Cimatti1998Strong} and \citet{Daniele1999Strong}, the solution is a (i) \emph{weak solution} 
if the resulting plan may achieve the goal without any guarantee; 
(ii) \emph{strong solution} if the resulting plan is guaranteed to reach the goal; and 
(iii) \emph{strong cyclic solution} if the resulting plan is guaranteed to reach the goal under the assumption that 
in the limit, each action will lead to each of its effects.

Fig.~\ref{fig:fond} shows a sample FOND description of the pick-item capability (shown in Fig.~\ref{fig:ppddl}).
Note that there are no probabilities associated with each possible effect set, as the representation
only shows that one of these possible set of effects is possible on executing this capability. Also, the language
only supports the keyword \texttt{action}, hence it is used for representing capability in the first line.
    
\subsection{PPDDL}
We use Probabilistic Planning Domain Definition Language (PPDDL) to represent the 
probabilistic models in our work. It has two main components: (i) a domain description, consisting of definitions of the
actions that are possible along with their preconditions and effects (an example of action description is shown in Fig.~\ref{fig:ppddl}); and (ii) a problem description representing all the objects 
in the environment, 
a fully defined initial state, and partial description of the goal state. 

Sample domain and problem descriptions for the driver agent are available below:

\begin{Verbatim}[commandchars=\\\{\}]
\PY{p}{(}\PY{k}{define}\PY{+w}{ }\PY{p}{(}\PY{k}{domain}\PY{+w}{ }\PY{n+nx}{driver\PYZhy{}agent}\PY{p}{)}
\PY{+w}{  }\PY{p}{(}\PY{k}{:requirements}\PY{+w}{ }\PY{k}{:typing}\PY{+w}{ }\PY{k}{:strips}\PY{+w}{ }\PY{k}{:probabilistic\PYZhy{}effects}\PY{p}{)}
\PY{+w}{  }\PY{p}{(}\PY{k}{:types}\PY{+w}{ }\PY{n+nc}{location}\PY{p}{)}
\PY{+w}{  }\PY{p}{(}\PY{k}{:predicates}
\PY{+w}{       }\PY{p}{(}\PY{n+nx}{vehicle\PYZhy{}at}\PY{+w}{ }\PY{n+nc}{?l}\PY{+w}{ }\PY{n+nc}{\PYZhy{}}\PY{+w}{ }\PY{n+nx}{location}\PY{p}{)}
\PY{+w}{       }\PY{p}{(}\PY{n+nx}{spare\PYZhy{}in}\PY{+w}{ }\PY{n+nc}{?l}\PY{+w}{ }\PY{n+nc}{\PYZhy{}}\PY{+w}{ }\PY{n+nx}{location}\PY{p}{)}
\PY{+w}{       }\PY{p}{(}\PY{n+nx}{road}\PY{+w}{ }\PY{n+nc}{?from}\PY{+w}{ }\PY{n+nc}{\PYZhy{}}\PY{+w}{ }\PY{n+nx}{location}\PY{+w}{ }\PY{n+nc}{?to}\PY{+w}{ }\PY{n+nc}{\PYZhy{}}\PY{+w}{ }\PY{n+nx}{location}\PY{p}{)}
\PY{+w}{       }\PY{p}{(}\PY{n+nx}{not\PYZhy{}flattire}\PY{p}{)}\PY{p}{)}

\PY{+w}{  }\PY{p}{(}\PY{k}{:action}\PY{+w}{ }\PY{n+nx}{move\PYZhy{}vehicle}
\PY{+w}{    }\PY{k}{:parameters}\PY{+w}{ }\PY{p}{(}\PY{n+nc}{?from}\PY{+w}{ }\PY{n+nc}{\PYZhy{}}\PY{+w}{ }\PY{n+nx}{location}\PY{+w}{ }\PY{n+nc}{?to}\PY{+w}{ }\PY{n+nc}{\PYZhy{}}\PY{+w}{ }\PY{n+nx}{location}\PY{p}{)}
\PY{+w}{    }\PY{k}{:precondition}\PY{+w}{ }\PY{p}{(}\PY{n+nb}{and}\PY{+w}{ }\PY{p}{(}\PY{n+nx}{vehicle\PYZhy{}at}\PY{+w}{ }\PY{n+nc}{?from}\PY{p}{)}\PY{+w}{ }\PY{p}{(}\PY{n+nx}{road}\PY{+w}{ }\PY{n+nc}{?from}\PY{+w}{ }\PY{n+nc}{?to}\PY{p}{)}\PY{+w}{ }\PY{p}{(}\PY{n+nx}{not\PYZhy{}flattire}\PY{p}{)}\PY{p}{)}
\PY{+w}{    }\PY{k}{:effect}\PY{+w}{ }\PY{p}{(}\PY{n+nb}{and}\PY{+w}{ }\PY{p}{(}\PY{n+nx}{vehicle\PYZhy{}at}\PY{+w}{ }\PY{n+nc}{?to}\PY{p}{)}\PY{+w}{ }\PY{p}{(}\PY{n+nb}{not}\PY{p}{(}\PY{n+nx}{vehicle\PYZhy{}at}\PY{+w}{ }\PY{n+nc}{?from}\PY{p}{)}\PY{p}{)}
\PY{+w}{       }\PY{p}{(}\PY{n+nx}{probabilistic}\PY{+w}{ }\PY{l+m+mf}{0.8}\PY{+w}{ }\PY{p}{(}\PY{n+nb}{and}\PY{+w}{ }\PY{p}{(}\PY{n+nb}{not}\PY{p}{(}\PY{n+nx}{not\PYZhy{}flattire}\PY{p}{)}\PY{p}{)}\PY{p}{)}\PY{p}{)}\PY{p}{)}\PY{p}{)}
\PY{+w}{ }
\PY{+w}{ }\PY{p}{(}\PY{k}{:action}\PY{+w}{ }\PY{n+nx}{change\PYZhy{}tire}
\PY{+w}{    }\PY{k}{:parameters}\PY{+w}{ }\PY{p}{(}\PY{n+nc}{?l}\PY{+w}{ }\PY{n+nc}{\PYZhy{}}\PY{+w}{ }\PY{n+nx}{location}\PY{p}{)}
\PY{+w}{    }\PY{k}{:precondition}\PY{+w}{ }\PY{p}{(}\PY{n+nb}{and}\PY{+w}{ }\PY{p}{(}\PY{n+nx}{spare\PYZhy{}in}\PY{+w}{ }\PY{n+nc}{?l}\PY{p}{)}\PY{+w}{ }\PY{p}{(}\PY{n+nx}{vehicle\PYZhy{}at}\PY{+w}{ }\PY{n+nc}{?l}\PY{p}{)}\PY{+w}{  }\PY{p}{(}\PY{n+nb}{not}\PY{p}{(}\PY{n+nx}{not\PYZhy{}flattire}\PY{p}{)}\PY{p}{)}\PY{p}{)}
\PY{+w}{    }\PY{k}{:effect}\PY{+w}{ }\PY{p}{(}\PY{n+nb}{and}\PY{+w}{ }\PY{p}{(}\PY{n+nb}{not}\PY{+w}{ }\PY{p}{(}\PY{n+nx}{spare\PYZhy{}in}\PY{+w}{ }\PY{n+nc}{?l}\PY{p}{)}\PY{p}{)}\PY{+w}{ }\PY{p}{(}\PY{n+nx}{not\PYZhy{}flattire}\PY{p}{)}\PY{p}{)}\PY{p}{)}
\PY{p}{)}
\end{Verbatim}

\begin{Verbatim}[commandchars=\\\{\}]
\PY{p}{(}\PY{k}{define}\PY{+w}{ }\PY{p}{(}\PY{k}{problem}\PY{+w}{ }\PY{n+nx}{driver\PYZhy{}agent\PYZhy{}9}\PY{p}{)}
\PY{+w}{  }\PY{p}{(}\PY{k}{:domain}\PY{+w}{ }\PY{n+nx}{driver\PYZhy{}agent}\PY{p}{)}
\PY{+w}{  }\PY{p}{(}\PY{k}{:objects}\PY{+w}{ }\PY{n+nx}{l\PYZhy{}1\PYZhy{}1}\PY{+w}{ }\PY{n+nx}{l\PYZhy{}1\PYZhy{}2}\PY{+w}{ }\PY{n+nx}{l\PYZhy{}1\PYZhy{}3}\PY{+w}{ }\PY{n+nx}{l\PYZhy{}2\PYZhy{}1}\PY{+w}{ }\PY{n+nx}{l\PYZhy{}2\PYZhy{}2}\PY{+w}{  }\PY{n+nx}{l\PYZhy{}3\PYZhy{}1}\PY{+w}{ }\PY{n+nc}{\PYZhy{}}\PY{+w}{ }\PY{n+nx}{location}\PY{p}{)}

\PY{+w}{  }\PY{p}{(}\PY{k}{:init}
\PY{+w}{    }\PY{p}{(}\PY{n+nx}{vehicle\PYZhy{}at}\PY{+w}{ }\PY{n+nx}{l\PYZhy{}1\PYZhy{}1}\PY{p}{)}\PY{+w}{ }\PY{p}{(}\PY{n+nx}{not\PYZhy{}flattire}\PY{p}{)}\PY{+w}{ }\PY{p}{(}\PY{n+nx}{spare\PYZhy{}in}\PY{+w}{ }\PY{n+nx}{l\PYZhy{}2\PYZhy{}1}\PY{p}{)}\PY{+w}{ }\PY{p}{(}\PY{n+nx}{spare\PYZhy{}in}\PY{+w}{ }\PY{n+nx}{l\PYZhy{}2\PYZhy{}2}\PY{p}{)}\PY{+w}{ }
\PY{+w}{    }\PY{p}{(}\PY{n+nx}{spare\PYZhy{}in}\PY{+w}{ }\PY{n+nx}{l\PYZhy{}3\PYZhy{}1}\PY{p}{)}\PY{+w}{ }\PY{p}{(}\PY{n+nx}{road}\PY{+w}{ }\PY{n+nx}{l\PYZhy{}1\PYZhy{}1}\PY{+w}{ }\PY{n+nx}{l\PYZhy{}1\PYZhy{}2}\PY{p}{)}\PY{+w}{ }\PY{p}{(}\PY{n+nx}{road}\PY{+w}{ }\PY{n+nx}{l\PYZhy{}1\PYZhy{}2}\PY{+w}{ }\PY{n+nx}{l\PYZhy{}1\PYZhy{}3}\PY{p}{)}\PY{+w}{ }
\PY{+w}{    }\PY{p}{(}\PY{n+nx}{road}\PY{+w}{ }\PY{n+nx}{l\PYZhy{}1\PYZhy{}1}\PY{+w}{ }\PY{n+nx}{l\PYZhy{}2\PYZhy{}1}\PY{p}{)}\PY{+w}{ }\PY{p}{(}\PY{n+nx}{road}\PY{+w}{ }\PY{n+nx}{l\PYZhy{}1\PYZhy{}2}\PY{+w}{ }\PY{n+nx}{l\PYZhy{}2\PYZhy{}2}\PY{p}{)}\PY{+w}{ }\PY{p}{(}\PY{n+nx}{road}\PY{+w}{ }\PY{n+nx}{l\PYZhy{}2\PYZhy{}1}\PY{+w}{ }\PY{n+nx}{l\PYZhy{}1\PYZhy{}2}\PY{p}{)}\PY{+w}{ }
\PY{+w}{    }\PY{p}{(}\PY{n+nx}{road}\PY{+w}{ }\PY{n+nx}{l\PYZhy{}2\PYZhy{}2}\PY{+w}{ }\PY{n+nx}{l\PYZhy{}1\PYZhy{}3}\PY{p}{)}\PY{+w}{ }\PY{p}{(}\PY{n+nx}{road}\PY{+w}{ }\PY{n+nx}{l\PYZhy{}2\PYZhy{}1}\PY{+w}{ }\PY{n+nx}{l\PYZhy{}3\PYZhy{}1}\PY{p}{)}\PY{+w}{ }\PY{p}{(}\PY{n+nx}{road}\PY{+w}{ }\PY{n+nx}{l\PYZhy{}3\PYZhy{}1}\PY{+w}{ }\PY{n+nx}{l\PYZhy{}2\PYZhy{}2}\PY{p}{)}
\PY{+w}{  }\PY{p}{)}

\PY{+w}{  }\PY{p}{(}\PY{k}{:goal}\PY{+w}{ }\PY{p}{(}\PY{n+nb}{and}\PY{+w}{ }\PY{p}{(}\PY{n+nx}{vehicle\PYZhy{}at}\PY{+w}{ }\PY{n+nx}{l\PYZhy{}1\PYZhy{}3}\PY{p}{)}\PY{p}{)}\PY{p}{)}
\PY{p}{)}
\end{Verbatim}

\section{Additional Details}

\subsection{Instantiated Predicates}
A literal corresponding to a 
predicate $p \in \mc{P}$ can appear in
$\emph{pre}(c)$ or any $e_i(c)\in \emph{eff}(c)$ of a capability 
$c \in \mc{C}$ iff 
it can be instantiated using a subset of parameters of $c$. 
E.g., consider a capability \lf{move-vehicle (?src ?dest)} and a predicate
\lf{(connected ?x ?y)} in the example discussed earlier. Suppose a literal
corresponding to the predicate \lf{(connected ?x ?y)} can appear in the precondition 
and/or the effect of \lf{move-vehicle (?src ?dest)}. 
The possible lifted instantiations of 
predicate \lf{connected} compatible with \lf{move-vehicle} 
are \lf{(connected ?src ?dest)}, 
\lf{(connected ?dest ?src)}, 
\lf{(connected ?src ?src)}, and 
\lf{(connected ?dest ?dest)}.
The number of parameters in a predicate $p \in P$ that is relevant to a 
capability $c \in \mc{C}$, i.e., instantiated using a subset of parameters of $c$, is bounded by the maximum arity 
of $c$.
So using the capability names and the 
predicates, we get a set of instantiated predicates. In our implementation we use these set of instantiated predicates as the set of predicates.

\subsection{Abstraction Example} 
Fig.~\ref{fig:qps}(left) shows an example from the cafe server SDM setting, where a concrete low-level state is represented as xyz-coordinates, roll, pitch, and yaw values by the simulator. This state is then converted to a high-level state shown in the figure. We use the Boolean evaluation functions for evaluating each predicate. The state is represented as conjunction of the true predicates.

\begin{figure}[h]
    \centering
    \includegraphics[width=0.73\textwidth]{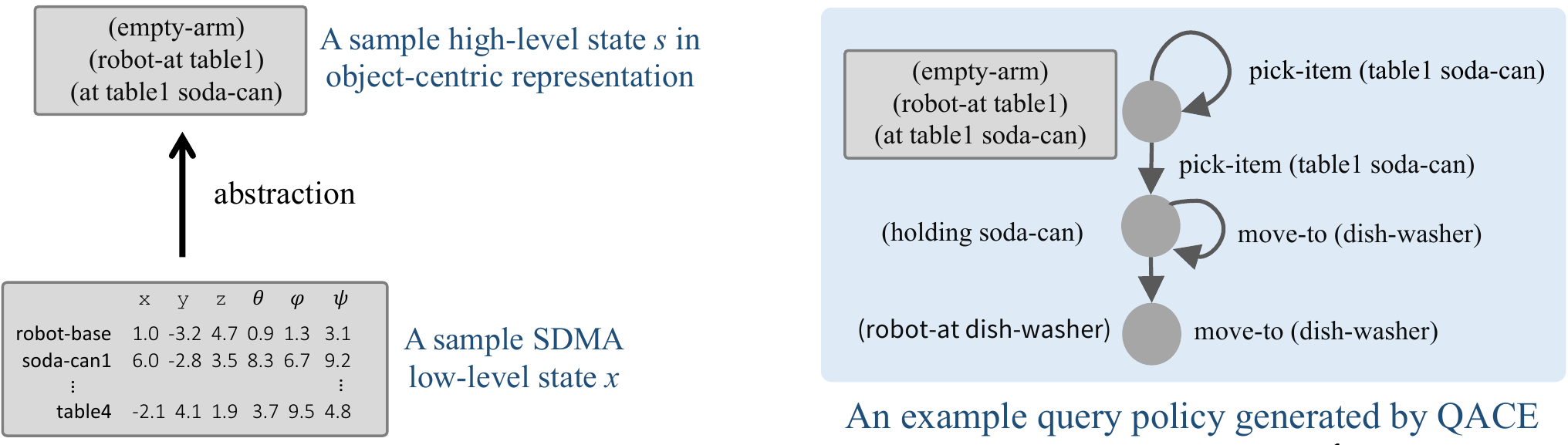}
    \caption{An example of abstraction of low-level state into a high level state (left) and an example of a policy simulation query (right). For the policy, the labels on the left of nodes correspond to state properties that must be true in those states, and the labels on right of edges correspond to the capabilities for each edge.
    The policy simulation query corresponds to: ``Given that the robot and \lf{soda-can} are at \lf{table1}, 
what will happen if the robot follows the following policy: if there is an item on the table and arm is empty, pick up the item; if an item is in the hand and location is not dishwasher, move to the dishwasher?''. }
    \label{fig:qps}
\end{figure}

\subsection{Example of Policy Simulation Query}
\label{app:psq_example}

Fig.~\ref{fig:qps}(right) shows an example of a policy simulation query. Note that the initial state is shown adjacent to the top-most node. 
A partial policy is a mapping from a partial state to a capability.
Such queries can be generated using non-deterministic planners like PRP~\citep{Muise2012PRP}. 

\mysssection{Generating Policy Simulation Queries using PRP}
QACE automates the generation of queries using FOND planning problems. QACE
always generates queries to distinguish between models that differ only on
one predicate corresponding to just one location (a precondition or effect in a capability). The main idea behind generating 
such queries is that the
responses to the query $q$ from the two models should be logically inconsistent.
To generate the policy simulation queries, QACE creates a FOND planning model and a problem.

Let $M_i$ and $M_j$ be a pair of FOND models expressed using $\mc{P}$ and $\mc{C}$ where $i,j \in \{T,F,I\}$.
QACE renames the predicates and capabilities in $M_i$ and $M_j$ as $\mc{P}_i$ and $\mc{P}_j$, and $\mc{C}_i$ and $\mc{C}_j$, respectively, so that there are no intersections and 
a pair of states in the two models can be progressed independently using pairs of capabilities.
This gives a planning model $M_{ij} = \tup{\mc{P}_{ij}, \mc{C}_{ij}}$.
Here, $\mc{P}_{ij} = \mc{P}_i \cup \mc{P}_j \cup \{\emph{(goal)}\}$, where $\emph{(goal)}$ is a 
0-ary predicate. It is used to identify when the goal for the FOND planning problem
is reached.
For each capability $\tup{c_i,c_j} \in \tup{\mc{C}_i,\mc{C}_j}$ such that their names match,
$\emph{pre}(c_{ij})$ of the combined capability $c_{ij}$ is disjunction of preconditions of $c_i$ and $c_j$. For 
$e(c_{ij}) \in \emph{eff}(c_{ij})$ ACE adds three conditional effects: (i) $\emph{pre}(c_i) \land \emph{pre}(c_j) \Rightarrow e(c_i) \land e(c_j)$;
(ii) $\emph{pre}(c_i) \land \neg\emph{pre}(c_j) \Rightarrow \emph{(goal)}$;  and
(iii) $\neg\emph{pre}(c_i) \land \emph{pre}(c_j) \Rightarrow \emph{(goal)}$.
An example of this process is included in the next section. 

Starting from an initial state, the FOND problem uses one
of these states and maintains two different
copies of all the objects in the environment, one corresponding to each of the
models. Each model only manipulates the objects in its own copy. QACE then 
solves a planning problem that has
an initial state $s_{I_{ij}} = \{p_i^{*_1},\dots,p_i^{*_z}, p_j^{*_1},\dots,p_j^{*_z} \}$ 
and a goal state $G_{ij} = (goal) \lor [\exists p \in \mc{P}_{ij}^* (p_i \land \neg p_j) \lor (\neg p_i \land p_j)]$. Here, $\mc{P}^*$ represents the
grounded version of predicates $\mc{P}$ using objects $O$ in the environment.
The partial policy $\pi$ generated as a solution to this 
planning problem is a \emph{strong solution}.
As shown by \citet{Cimatti1998Strong}, the solution is a 
\emph{strong solution} if the resulting plan is guaranteed to reach the goal.
The solution partial policy will lead the two models in a state where
at least one capability cannot be applied, and hence the \emph{(goal)} predicate becomes true.
This is possible because the models differ only in the way one predicate is added at a location. We formalize this with 
the following lemma. The proof is available in Appendix~\ref{appendix:proofs}.

\begin{lemma}
\label{ref:FOND_strong}
Given two models $M_i$ and $M_j$ such
that both are abstractions of the same FOND model,
and are at the same level of abstraction with only one predicate
differing in way it is added in one of the location, the intermediate FOND planning problem
created using QACE to generate policy simulation queries has a
strong solution.
\end{lemma}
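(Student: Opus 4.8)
The plan is to show that the FOND planning problem $M_{ij}=\tup{\mc{P}_{ij},\mc{C}_{ij}}$ constructed by \nameAbbr admits a \emph{strong solution}, i.e.\ a partial policy guaranteed to reach the goal $G_{ij}=(goal)\lor[\exists p\,(p_i\land\neg p_j)\lor(\neg p_i\land p_j)]$ regardless of how non-determinism resolves. First I would unpack the construction: since $M_i$ and $M_j$ differ only in how a single predicate $p$ is added at a single location $l$ (precondition or effect of one capability $c$), and are otherwise identical abstractions of the same underlying FOND model, the two object-copies evolve in lockstep under every capability \emph{except} possibly at $c$. The key observation is that the combined capability $c_{ij}$ has conditional effects that trigger $(goal)$ precisely when the preconditions of $c_i$ and $c_j$ disagree (cases (ii) and (iii)), and otherwise apply $e(c_i)\land e(c_j)$ to the respective copies.

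The core of the argument is a case analysis on \emph{where} the differing predicate $p$ sits. If $p$ differs at a \emph{precondition} location of $c$, then I would exhibit a strong policy that drives the paired state to one in which $c$'s precondition formula is satisfiable in one model but not the other: applying $c_{ij}$ there fires conditional effect (ii) or (iii), setting $(goal)$ and reaching $G_{ij}$. If $p$ differs at an \emph{effect} location, then the two copies have identical preconditions throughout, so $c_{ij}$ always fires effect (i); but after executing $c$ the two copies disagree on the truth of $p$ in their respective states, so the disjunct $\exists p\,(p_i\land\neg p_j)\lor(\neg p_i\land p_j)$ of $G_{ij}$ becomes true and we again reach the goal. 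In both cases the goal is reached on \emph{every} branch of the non-determinism (because the differing predicate at a single location forces a deterministic divergence on the relevant literal), which is exactly what strongness requires; the disjointness of $\mc{P}_i$ and $\mc{P}_j$ guarantees the two copies never interfere, so this reasoning is sound.

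The step I expect to be the main obstacle is establishing \emph{reachability} of a distinguishing state in the worst case, i.e.\ proving that from the given initial state $s_{I_{ij}}$ there always exists a policy whose execution can force the paired system into a configuration where $c$'s relevant location actually matters. One must argue that because both models are abstractions of the same FOND model at the same abstraction level (differing only in the one literal), the chosen capability $c$ is reachable and applicable in a way that exercises the differing literal; I would lean on the fact that the location was selected precisely because it is a legal instantiation relative to $c$'s parameters (the instantiated-predicates discussion), so a grounding exists, and on the strong-cyclic-to-strong machinery of \citet{Cimatti1998Strong}. A secondary subtlety is verifying that the goal disjunct phrased over grounded predicates $\mc{P}^*_{ij}$ correctly captures ``the states $s^i,s^j$ don't match or one precondition is met but not the other,'' which I would handle by checking that each clause of $G_{ij}$ corresponds to exactly one of the conditional-effect triggers or to a post-execution literal mismatch, closing the equivalence between reaching $G_{ij}$ and obtaining a logically inconsistent response from $M_i$ and $M_j$.
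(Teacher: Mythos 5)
Your proposal takes essentially the same route as the paper's proof: the identical two-case analysis on whether the differing predicate $p$ sits in a precondition or an effect of $c$, with conditional effects (ii)/(iii) firing the $(goal)$ predicate in the precondition case and effect (i) producing a literal mismatch that satisfies the disjunctive goal $G_{ij}$ in the effect case. The paper organizes this around a separate proposition showing that the last capability in the policy must be the refined capability $c$ --- which your lockstep-evolution observation captures implicitly --- and its sketch, like yours, leaves the reachability subtlety you flag unresolved.
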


We next an example of a sample planning domain and problem using which QACE generates a query. Recall that a FOND planning problem consists of two components, a planning domain and a planning problem. We will see an example of both below.

\mysssection{FOND planning domain} Consider we have a capability \lf{move-vehicle (?frm ?to)} in the Cafe server robot, and we already know one of its preconditions; \lf{(has-charge)}. We are now trying to find what will be the correct way to add the predicate \lf{(robot-at ?frm)} in the
precondition of this \lf{move-vehicle (?frm ?to)} capability. Consider we have two models $M_i$ and $M_j$, where $i=T$ and $j=F$. We will represent their
\lf{move-vehicle} capability as follows:

\begin{Verbatim}[commandchars=\\\{\}]
\PY{p}{(}\PY{k}{:action} \PY{n+nx}{move-vehicle\PYZus{}i}
  \PY{k}{:parameters} \PY{p}{(}\PY{n+nc}{?frm} \PY{n+nc}{\PYZhy{}} \PY{n+nc}{loc} \PY{n+nc}{?to} \PY{n+nc}{\PYZhy{}} \PY{n+nc}{loc}\PY{p}{)}
  \PY{k}{:precondition} \PY{p}{(}\PY{n+nb}{and} \PY{p}{(}\PY{n+nx}{has\PYZhy{}charge\PYZus{}i}\PY{p}{)}
    \PY{p}{(}\PY{n+nx}{robot\PYZhy{}at\PYZus{}i} \PY{n+nc}{?frm}\PY{p}{)}\PY{p}{)}
  \PY{k}{:effect} \PY{p}{(}\PY{n+nb}{and} \PY{p}{)}\PY{p}{)}
\end{Verbatim}

\begin{Verbatim}[commandchars=\\\{\}]
\PY{p}{(}\PY{k}{:action} \PY{n+nx}{move-vehicle\PYZus{}j}
  \PY{k}{:parameters} \PY{p}{(}\PY{n+nc}{?frm} \PY{n+nc}{\PYZhy{}} \PY{n+nc}{loc} \PY{n+nc}{?to} \PY{n+nc}{\PYZhy{}} \PY{n+nc}{loc}\PY{p}{)}
  \PY{k}{:precondition} \PY{p}{(}\PY{n+nb}{and} \PY{p}{(}\PY{n+nx}{has\PYZhy{}charge\PYZus{}j}\PY{p}{)}
    \PY{p}{(}\PY{n+nb}{not} \PY{p}{(}\PY{n+nx}{robot\PYZhy{}at\PYZus{}i} \PY{n+nc}{?frm}\PY{p}{)}\PY{p}{)}\PY{p}{)}
  \PY{k}{:effect} \PY{p}{(}\PY{n+nb}{and} \PY{p}{)}\PY{p}{)}
\end{Verbatim}

To create a query, we will combine the \lf{move-vehicle} capabilities into a combined capability. This should be done in such a way that the combined capability is executed when at least one of the model's preconditions are satisfied. Hence, for each capability $\tup{c_i,c_j} \in \tup{\mc{C}_i,\mc{C}_j}$ s.t. $\emph{name}(c_i) = \emph{name}(c_j)$,
$\emph{pre}(c_{ij}) = \emph{pre}(c_i) \lor \emph{pre}(c_j)$.

Now on executing this combined capability, we should be able to identify if the preconditions or effects of the capabilities in the two models $M_i$ and $M_j$ are different. To take these into account, the effect of the combined capability will be such that for each
$e(c_{ij}) \in \emph{eff}(c_{ij})$
we add three conditional effects: (i) $\emph{pre}(c_i) \land \emph{pre}(c_j) \Rightarrow e(c_i) \land e(c_j)$;
(ii) $\emph{pre}(c_i) \land \neg\emph{pre}(c_j) \Rightarrow \emph{(goal)}$;  and
(iii) $\neg\emph{pre}(c_i) \land \emph{pre}(c_j) \Rightarrow \emph{(goal)}$. 
The condition (ii) and (iii) helps identify that the precondition of the capability according to only one of the models $M_i$ or $M_j$ is satisfied. The condition (i) captures the case where the precondition of the capability according to both the models are satisfied. In this case, the effects of the capability according to both the models are applied.

Applying it here for the \lf{move-vehicle} 
capability, we get:

\begin{Verbatim}[commandchars=\\\{\}]
  \PY{p}{(}\PY{k}{:action} \PY{n+nx}{move-vehicle\PYZus{}ij}
    \PY{k}{:parameters} \PY{p}{(}\PY{n+nc}{?frm} \PY{n+nc}{\PYZhy{}} \PY{n+nc}{loc} \PY{n+nc}{?to} \PY{n+nc}{\PYZhy{}} \PY{n+nc}{loc}\PY{p}{)}
    \PY{k}{:precondition} \PY{p}{(}\PY{n+nb}{or}
      \PY{p}{(}\PY{n+nb}{and} \PY{p}{(}\PY{n+nx}{has\PYZhy{}charge\PYZus{}i}\PY{p}{)}
        \PY{p}{(}\PY{n+nx}{robot\PYZhy{}at\PYZus{}i} \PY{n+nc}{?frm}\PY{p}{)}\PY{p}{)}
      \PY{p}{(}\PY{n+nb}{and} \PY{p}{(}\PY{n+nx}{has\PYZhy{}charge\PYZus{}j}\PY{p}{)}
        \PY{p}{(}\PY{n+nb}{not} \PY{p}{(}\PY{n+nx}{robot\PYZhy{}at\PYZus{}j} \PY{n+nc}{?frm}\PY{p}{)}\PY{p}{)}\PY{p}{)}
    \PY{p}{)}
    \PY{k}{:effect} \PY{p}{(}\PY{n+nb}{and} 
      \PY{p}{(}\PY{n+nb}{when} \PY{p}{(}\PY{n+nb}{and} \PY{p}{(}\PY{n+nx}{has\PYZhy{}charge\PYZus{}i}\PY{p}{)}
          \PY{p}{(}\PY{n+nx}{robot\PYZhy{}at\PYZus{}i} \PY{n+nc}{?frm}\PY{p}{)}
          \PY{p}{(}\PY{n+nx}{has\PYZhy{}charge\PYZus{}j}\PY{p}{)}
          \PY{p}{(}\PY{n+nb}{not} \PY{p}{(}\PY{n+nx}{robot\PYZhy{}at\PYZus{}j} \PY{n+nc}{?frm}\PY{p}{)}\PY{p}{)}\PY{p}{)}
        \PY{p}{(}\PY{n+nb}{and} \PY{p}{)}
      \PY{p}{)}
      \PY{p}{(}\PY{n+nb}{when} \PY{p}{(}\PY{n+nb}{and} \PY{p}{(}\PY{n+nx}{has\PYZhy{}charge\PYZus{}i}\PY{p}{)}
          \PY{p}{(}\PY{n+nx}{robot\PYZhy{}at\PYZus{}i} \PY{n+nc}{?frm}\PY{p}{)}
          \PY{p}{(}\PY{n+nb}{or} \PY{p}{(}\PY{n+nb}{not} \PY{p}{(}\PY{n+nx}{has\PYZhy{}charge\PYZus{}j}\PY{p}{)}\PY{p}{)}
            \PY{p}{(}\PY{n+nx}{robot\PYZhy{}at\PYZus{}j} \PY{n+nc}{?frm}\PY{p}{)}\PY{p}{)}\PY{p}{)}
        \PY{p}{(}\PY{n+nb}{and} \PY{p}{(}\PY{n+nx}{goal}\PY{p}{)}\PY{p}{)}
      \PY{p}{)}
      \PY{p}{(}\PY{n+nb}{when} \PY{p}{(}\PY{n+nb}{and} \PY{p}{(}\PY{n+nx}{has\PYZhy{}charge\PYZus{}j}\PY{p}{)}
          \PY{p}{(}\PY{n+nb}{not} \PY{p}{(}\PY{n+nx}{robot\PYZhy{}at\PYZus{}j} \PY{n+nc}{?frm}\PY{p}{)}\PY{p}{)}
          \PY{p}{(}\PY{n+nb}{or} \PY{p}{(}\PY{n+nb}{not} \PY{p}{(}\PY{n+nx}{has\PYZhy{}charge\PYZus{}i}\PY{p}{)}\PY{p}{)}
            \PY{p}{(}\PY{n+nb}{not} \PY{p}{(}\PY{n+nx}{robot\PYZhy{}at\PYZus{}i} \PY{n+nc}{?frm}\PY{p}{)}\PY{p}{)}\PY{p}{)}\PY{p}{)}
        \PY{p}{(}\PY{n+nb}{and} \PY{p}{(}\PY{n+nx}{goal}\PY{p}{)}\PY{p}{)}
      \PY{p}{)}
    \PY{p}{)}
  \PY{p}{)}
\end{Verbatim}

Note that we have expanded
$\emph{pre}(c_i) \land \neg\emph{pre}(c_j)$ using disjunction of negations of all predicates in $\emph{pre}(c_j)$, etc. 
A pictorial example of a similar process for the \lf{has-charge} predicate in effects is shown in Fig.~\ref{fig:query_generation} below.

\begin{figure}[h]
    \centering
    \includegraphics[width=0.75\textwidth]{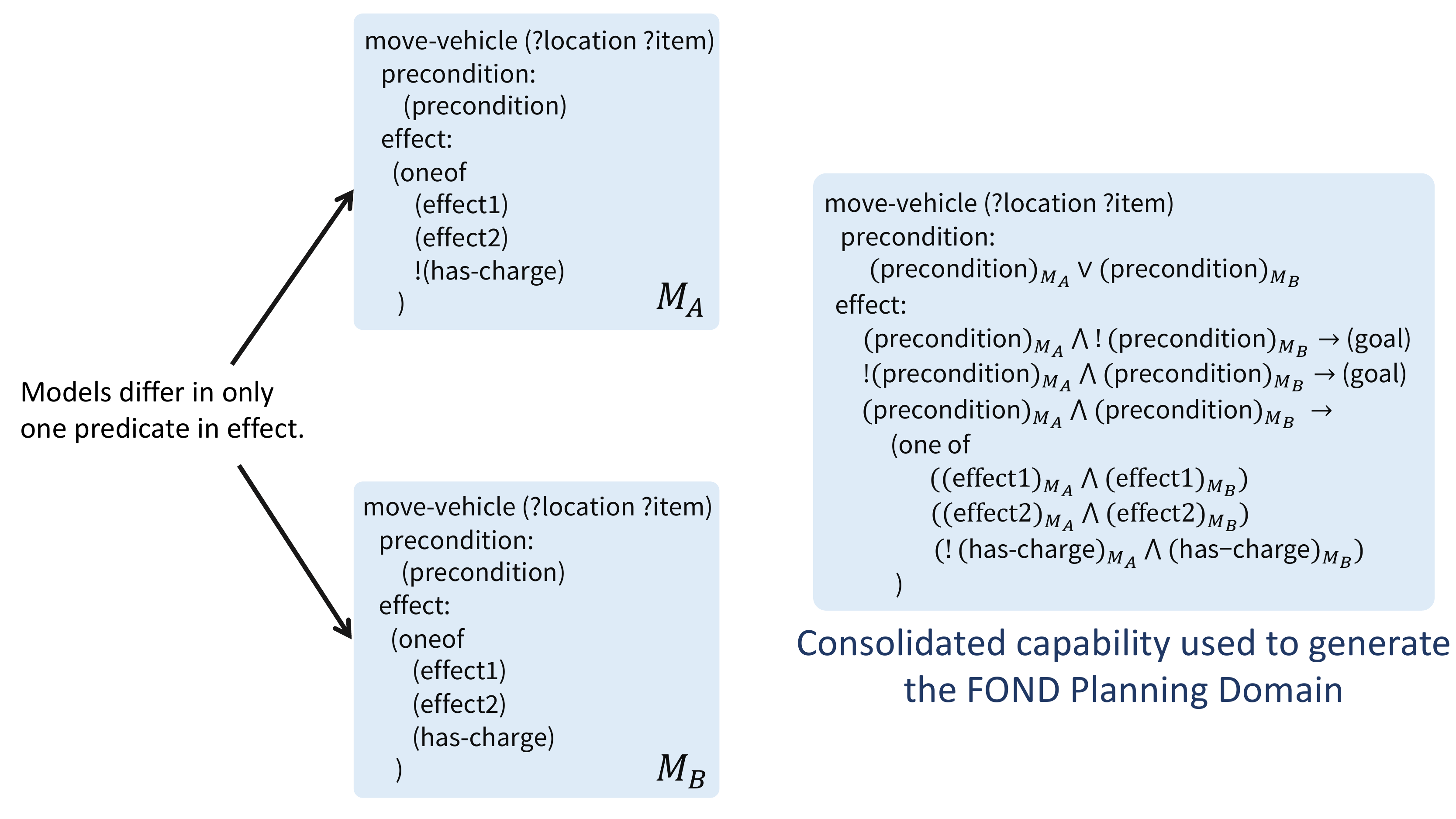}
    \caption{An example showing how two models $M_i$ and $M_j$ are combined to generate a FOND planning domain when the predicate is being added in effect of a capability. Note that the models only differ in one predicate having different form in both models.}
    \label{fig:query_generation}
\end{figure}

\mysssection{FOND planning problem}
The FOND planner must maintain two different
copies of all the objects in the environment, one corresponding to each of the
models $M_i$ and $M_j$. Each model only manipulates the objects in its own copy. 
The initial state of this planning problem $s_I$ is $\{p_i^{_1},\dots,p_i^{_z}, p_j^{_1},\dots,p_j^{_z} \}$. The goal formula $G$ is $(goal) \lor [\exists p \in \mathcal{P} (p_i \land \neg p_j) \lor (\neg p_i \land p_j)]$. Here $(goal)$ becomes true when a capability is executed by the policy such that the precondition of that capability is satisfied according to only one of the two models. The other condition captures the fact that a state is reached such that at least one of the predicate is true according to one of the models and false according to the other. This points to a difference in the effects of the capability that was executed last.

\subsection{Identifiable Effects}
\label{appendix:ident_effects}
A set of effects of a capability are identifiable if there exists a state such that 
when we execute a capability in that state, we can identify which of its effects was
executed. Let us consider a capability $a$, such that $\emph{pre}(a)= \{p_1 \land p_2 \land \neg p_3\}$, and
$\emph{eff}(a) = \{\tup{p_3 \land p_4 , 0.2}, \tup{p_3 \land \neg{p_2}, 0.5}, \tup{p_3 \land \neg p_4 \land \neg p_2, 0.3}\}$. The effects of this capability are identifiable because if we execute this capability in state $\{p1, p2, p4\}$, 
we can identify which of its effect is getting executed. This is because, on executing $a$, we can identify each effect as
follows: (i) if the resulting state has $p_4$ and $p_2$, then it is the first effect, (ii) if the resulting state has $p_4$
but not $p_2$, then it is the second effect, and (iii) if the resulting has neither $p_2$ nor $p_4$, then it is the third effect.

With all these concepts in place, we next see how one loop of the Alg.~\ref{alg:qace} is run.

\subsection{Example Run of the Algorithm}
Consider that the set of predicates consists of \lf{(has-charge)} and \lf{(robot-at ?frm)}, and \lf{move-vehicle} is one of the capability. Consider that we are starting with an empty model in line 2 of Alg.~\ref{alg:qace}. Now consider that the actual precondition of the \lf{move-vehicle} capability is \lf{(has-charge)} $\land$ \lf{(robot-at ?frm)}. The automated query generation process will involve executing the capability successfully in some state $s$ by the policy. The SDMA can only execute the capability in $s$ if \lf{(has-charge)} $\land$ \lf{(robot-at ?frm)} is true in $s$. As mentioned in Sec.~\ref{sec:learningmodels}, if $s$ doesn’t fulfill this criterion (i.e., the SDMA fails to execute the policy successfully) a new query is generated from a new initial state $s'$. Hence, this property of executing the capability in a state having \lf{(has-charge)} $\land$ \lf{(robot-at ?frm)} is ensured. Now, when reasoning about \lf{(has-charge)}, the policy can ask the agent to execute that capability in the state $s \setminus $\lf{(has-charge)}
 and if the SDMA fails to execute it then it means \lf{(has-charge)}
 is part of the precondition. Similarly, this can be done for \lf{(robot-at ?frm)} independently.

In the worst case, the search for a state $s$ where a query policy is executable will be exponential, but as the evaluations show, we can learn the correct model much faster. We also mention a way to overcome this in Sec.~\ref{sec:learningmodels}. Please note that even for methods like reinforcement learning, the worst-case upper bound is exponential in terms of the state space.

\mysssection{Possible models and their pruning} Now we see how QACE learns a correct model once it finds a state $s$ where a capability is executable by the agent. 
Consider that QACE is processing the tuple $\langle l,p \rangle$ = $\langle$precondition of \lf{move-vehicle}, \lf{(has-charge)}$\rangle$ in line 3 of Alg.~\ref{alg:qace}.

Now, QACE will generate the three models in line 4: (i) $M_T$ that has \lf{(has-charge)} as precondition of \lf{move-vehicle} capability; (ii) $M_F$ that has $\neg$\lf{(has-charge)} as precondition of \lf{move-vehicle} capability; and (iii) $M_I$ that has an empty precondition for \lf{move-vehicle} capability.

Consider that QACE is considering the pair $\langle M_T,M_F\rangle$ in line 5 of Alg.~\ref{alg:qace}. 
In the example being considered, executing the \lf{move-vehicle} capability in the state $s$  can help QACE distinguish between $M_T$ and $M_F$. Here the model $M_F$ will be unable to execute the \lf{move-vehicle} capability in $s$, whereas the model $M_T$ and the agent will be able to. So QACE will prune $M_F$ in line 8. 

Next, QACE will consider the pair $\langle M_T,M_I\rangle$ in line 5 of Alg.~\ref{alg:qace}. Here, to distinguish between these model, QACE will execute the \lf{move-vehicle} capability in a state $s'$ where \lf{(has-charge)} is false. Note that this state is also not generated manually, and the query generation does this autonomously, starting from the state $s$. Here the model $M_T$ and the agent will fail to execute the capability, whereas the model $M_I$ will succeed. Hence QACE can prune out $M_I$, leading it to learn the correct model $M^* = M_T$ where \lf{(has-charge)} is a precondition of the \lf{move-vehicle} capability.

Now starting with this updated current partial model $M^*$, consider that QACE picks  the tuple $\langle l,p \rangle$ = $\langle$precondition of \lf{move-vehicle}, \lf{(robot-at ?frm)}$\rangle$ in line 3 of Alg.~\ref{alg:qace}.
QACE will then generate three new models in next iteration in line 4: (i) $M_T$ that has \lf{(has-charge)} $\land$ \lf{(robot-at ?frm)} as precondition of \lf{move-vehicle} capability; (ii) $M_F$ that has $\neg$\lf{(has-charge)} $\land \neg$\lf{(robot-at ?frm)} as precondition of \lf{move-vehicle} capability; and (iii) $M_I$ that has \lf{(has-charge)} as precondition of \lf{move-vehicle} capability. So essentially, QACE builds upon the already learned partial model $M^*$ in previous iterations, and continues refining the model to eventually end up with the correct FOND model.

Once the correct set of preconditions and effects are learned, QACE counts the number of times each effect set was observed on executing each capability and perform the maximum likelihood estimation for each effect set to calculate the probabilities for each effect set. Note that a capability $c$ will at least appear in policies for all $\langle l, p \rangle$ pairs, such that location $l$ corresponds to a precondition or effect in $c$. So effectively, a capability $c$ can appear in at least $2 \times |\mc{P}|$ queries. So we will have at least $2 \times |\mc{P}| \times \eta$ samples for each capability.

\setcounter{lemma}{0}
\setcounter{theorem}{0}

\section{Theoretical Results}
\label{appendix:proofs}
This section provides proofs for the two theorems mentioned in the main paper.
We will first show that the plan in the distinguishing queries always ends up with the capability that is part
of the pal tuple being concretized at that time. This will help us in limiting our analysis to, at most,
the last 2 capabilities in the plan.

\begin{proposition}
\label{lem:last_action}
Let $M_i, M_j$, where $i,j \in \{T,F,I\}$, be the two models generated by
adding a predicate $p$ in a location corresponding to a capability $c$ to a model $M$. Suppose $q = \tup{s_I, \pi, G, \alpha, \eta}$ is a distinguishing query for two distinct models $M_i,
M_j$. The last capability in the partial policy $\pi$ to achieve $G$ will be $c$.
\end{proposition}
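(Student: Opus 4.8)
The plan is to exploit the fact that $M_i$ and $M_j$ are identical except for the single predicate $p$ at the one location of capability $c$: every other capability $c' \neq c$ has syntactically identical preconditions and effects in the two models. I would track, along any execution of $\pi$ in the combined FOND model $M_{ij}$, the pair of states $\tup{s^i, s^j}$ --- the copy manipulated under $M_i$ and the copy manipulated under $M_j$ --- and prove the invariant that these two copies remain equal and the auxiliary predicate $(goal)$ stays false until $c$ is executed.

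First I would establish this invariant by induction on the length of an execution prefix. The base case is immediate: the initial state $s_{I_{ij}}$ places identical copies of $s_0$ in both halves and $(goal)$ is false. For the inductive step, assume $s^i = s^j$ and $(goal)$ is false, and suppose some capability $c' \neq c$ is applied. Since $\emph{pre}(c'_i) = \emph{pre}(c'_j)$ and $\emph{eff}(c'_i) = \emph{eff}(c'_j)$, the two preconditions are simultaneously satisfied or simultaneously violated on the equal copies, so the conditional effects that set $(goal)$ never fire, and applying identical effects to identical states preserves $s^i = s^j$. Hence every capability other than $c$ maintains the invariant.

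Next I would observe that the goal $G = (goal) \lor [\exists p\,(p_i \land \neg p_j) \lor (\neg p_i \land p_j)]$ cannot hold while the invariant holds: equal copies make the disjunction over predicate disagreements false, and $(goal)$ is false by assumption. Thus no prefix built solely from capabilities $\neq c$ can reach a goal state, so the transition that first satisfies $G$ must be an application of $c$. This is consistent with the construction: executing $c$ is the only event that can either trigger the conditional effect setting $(goal)$ --- which occurs exactly when $\emph{pre}(c_i)$ and $\emph{pre}(c_j)$ disagree (the precondition-location case) --- or make some $p_i \neq p_j$ via the differing effect of $c$ (the effect-location case). Since $\pi$ is a strong solution reaching $G$ by Lemma~\ref{ref:FOND_strong}, every trace eventually executes $c$, and the step immediately before arriving at a goal state is necessarily $c$, which is exactly the claim.

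The step I expect to be the main obstacle is the careful case analysis in the effect-location setting: there one must check that the shared preconditions make $c$ applicable on both copies at once, and that it is precisely the divergence introduced by $c$'s differing effect on $p$, rather than any later capability, that first satisfies $G$. A related subtlety is ruling out that $(goal)$ or a copy-disagreement was inherited from an earlier step; stating the invariant strongly enough to cover both $(goal)$ and copy-equality simultaneously is what closes this off, so the whole argument rests on choosing that invariant correctly.
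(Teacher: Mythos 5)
Your proof is correct, and it in fact supplies the justification that the paper's own proof leaves implicit. The paper argues by contradiction in a few lines: if the last capability of $\pi$ were $c' \neq c$, then reaching $G_{ij}$ via $c'$ would force $M_i$ and $M_j$ to differ in the precondition or an effect of $c'$, contradicting that they differ only on $c$. The load-bearing assertion there --- that achieving the goal requires the final capability to be one on which the two models disagree --- is stated but not derived. Your execution invariant (copy-equality $s^i = s^j$ together with $\neg(goal)$, maintained by induction over prefixes) proves exactly that assertion directly: since the combined capability $c'_{ij}$ pairs identical effects $e(c'_i) \land e(c'_j)$ and the conditional effects setting $(goal)$ fire only on precondition disagreement, no capability other than $c$ can break the invariant, and $G = (goal) \lor [\exists p\,(p_i \land \neg p_j) \lor (\neg p_i \land p_j)]$ is false while it holds. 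You also correctly handle a subtlety the paper glosses over: in the FOND setting the effects are nondeterministic, and your argument works because the construction of $M_{ij}$ applies the \emph{same} effect set to both copies, so equal states stay equal under any outcome. In short, the paper buys brevity with its contradiction framing; your direct invariant buys rigor, at the cost of the inductive bookkeeping you flagged yourself. Both arguments rest on the same single fact --- the models coincide on every capability except $c$ --- so there is no divergence in substance, only in how much of the reasoning is made explicit.
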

\begin{proof}
We prove this by contradiction. Consider that the last capability of the policy $\pi$ 
in the distinguishing
query $q$ is $c' \neq c$. Now the query $q$ used to distinguish between
$\m_i$ and $\m_j$ is generated using the FOND planning problem 
$\tup{M_{ij}, s_{I_{ij}}, G_{ij}}$, which has a
solution if both the models have different precondition or at least one different effect for
the same capability. Since the last capability of the policy is $c'$, the two models either
have different preconditions for $c'$ or different effects. This is not possible
as, according to Alg.~\ref{alg:qace}, $\m_i$ and $\m_j$ differ only in precondition or effect of
one capability $c$. Hence $c'=c$.
\end{proof}

We now use this proposition to prove Lemma~\ref{ref:FOND_strong} stated in Appendix~\ref{app:psq_example}.

\begin{lemma}
\label{ref:FOND_strong0}
Given two models $M_i$ and $M_j$ such
that both are abstractions of the same FOND model,
and are at the same level of abstraction with only one predicate
differing in way it is added in one of the location, the intermediate FOND planning problem
created using QACE to generate policy simulation queries has a
strong solution.
\end{lemma}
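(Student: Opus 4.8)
The plan is to exhibit a concrete policy and verify it is a strong solution, using Proposition~\ref{lem:last_action} to restrict attention to the behaviour of the final capability $c$. The first step is to establish a \emph{synchronization invariant} for the doubled model $M_{ij}$. Since $M_i$ and $M_j$ differ only in the placement of the single predicate $p$ at one location of $c$, every other capability $c' \neq c$ has $\emph{pre}(c'_i)=\emph{pre}(c'_j)$ and $\emph{eff}(c'_i)=\emph{eff}(c'_j)$. Starting from $s_{I_{ij}}$, which holds two identical copies of the seed state, I would show by induction that executing any combined capability $c'_{ij}$ fires only conditional effect (i): the two preconditions always agree, so effects (ii) and (iii) (which require exactly one precondition to hold) never trigger, and the single \lf{oneof} outcome selected is applied identically to both copies. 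Hence the two copies stay identical, $\emph{(goal)}$ is never set prematurely, and the reachable synchronized states of $M_{ij}$ correspond exactly to the reachable states of the shared underlying FOND model.

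Next I would use the modeling difference at $c$ to produce a \emph{revealing} transition and then assemble the policy. In the precondition case (say $p$ lies in $\emph{pre}(c_i)$ but not $\emph{pre}(c_j)$), there is an assignment to the shared precondition predicates together with $p$ making $\emph{pre}(c_i)$ and $\emph{pre}(c_j)$ disagree; because both models abstract the same FOND model at the same level, such an assignment is realized by a reachable synchronized state $s^\ast$, and applying $c_{ij}$ there fires effect (ii) or (iii), \emph{deterministically} asserting $\emph{(goal)}$. In the effect case, I would reach a synchronized state where both preconditions hold, so effect (i) applies $\emph{eff}(c_i)\land\emph{eff}(c_j)$ to the two copies; as these differ on $p$, the successor satisfies $(p_i\land\neg p_j)\lor(\neg p_i\land p_j)$, meeting the second disjunct of $G_{ij}$. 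By Proposition~\ref{lem:last_action} the distinguishing plan ends in $c$, so the candidate solution is: drive the synchronized copies to $s^\ast$ using non-$c$ capabilities, then apply $c_{ij}$.

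The hard part will be arguing that this policy is \emph{strong} rather than merely weak or strong-cyclic, i.e.\ that every nondeterministic branch reaches $G_{ij}$ in boundedly many steps. The synchronization invariant is the main lever: because non-$c$ capabilities propagate the identical outcome to both copies and never set $\emph{(goal)}$, the sole possible source of divergence between the copies is the one modeling difference at $c$, which is precisely what $G_{ij}$ is engineered to detect, so no branch can terminate in a non-goal configuration on which the two models would agree. Two residual obstacles remain, and I expect the second to be the crux. First, one must guarantee that $s^\ast$ is robustly reachable from $s_{I_{ij}}$; I would discharge this from the hypothesis that $M_i,M_j$ abstract the same FOND model at the same abstraction level, together with QACE's selection of a seed region in which $c$ is applicable, so that reaching $s^\ast$ reduces to a reachability task in the shared dynamics. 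Second, in the effect case one must ensure the discrepancy on $p$ manifests on \emph{whichever} outcome is taken; I would resolve this by observing that the location-level edit to $p$ is carried by the outcome that effect (i) copies into both states, so the divergence surfaces on every branch and the solution stays strong.
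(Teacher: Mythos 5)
Your proposal is correct and takes essentially the same route as the paper's (sketch) proof: Proposition 1 pins the distinguishing capability $c_{ij}$ as the last step of the policy, and your case split---conditional effects (ii)/(iii) asserting $\emph{(goal)}$ when the differing location is a precondition, and effect (i) yielding a successor satisfying $(p_i \land \neg p_j) \lor (\neg p_i \land p_j)$ when it is an effect---is exactly the paper's argument. Your synchronization invariant for non-$c$ capabilities and the explicit attention to strongness across nondeterministic branches merely make rigorous what the paper's sketch leaves implicit.
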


\begin{sproof}

We prove this in two parts. In the first part, we consider the case where we are refining 
the model in terms of the precondition of some capability. 
Recall that for each capability $c_{ij}$, we have 3 conditional effects:
i) $\emph{pre}(c_i) \land \emph{pre}(c_j) \Rightarrow e(c_i) \land e(c_j)$;
(ii) $\emph{pre}(c_i) \land \neg\emph{pre}(c_j) \Rightarrow \emph{(goal)}$;  and
(iii) $\neg\emph{pre}(c_i) \land \emph{pre}(c_j) \Rightarrow \emph{(goal)}$.
Now, according to proposition 1, capability $c_{ij}$ has to be the last capability in the policy $\pi$.
Since the model $M_i$ and $M_j$ differ only in preconditions, condition (ii) or (iii) must
be true for $c_{ij}$. This implies that on executing $c_{ij}$, the $(goal)$ predicate will become true, and 
executing this policy $\pi$ will end up in reaching the goal.

In the second part, we consider the case where we are refining 
the model in terms of the effects of some capability.
According to proposition 1, capability $c_{ij}$ has to be the last capability in the policy $\pi$.
Since the model $M_i$ and $M_j$ differ only in effects, condition (i) must be true for $c_{ij}$.
This implies that on executing $c_{ij}$, one of the predicates will become true according to one model,
and false according to another, and hence 
executing this policy $\pi$ will end up in reaching the goal condition $G_{ij}$.
\end{sproof}

Next, we prove the soundness and completeness of the learned model w.r.t. the agent model. Note that an important part of the process is to get a state $s$, where 
a capability $c$ can be executed successfully. We can collect this information
using some random traces, using a state where all capabilities are applicable, or
asking the agent for a state where certain conditions are met ($Q_{SR}$).
We use this information in the proof.

\begin{theorem}
\label{thm:thm1}
Let $\agent$ be a black-box SDMA with a ground truth transition model $\mc{T}'$ expressible in terms of predicates $\mc{P}$ and a set of capabilities $\mc{C}$. Let $M^*$ be the non-deterministic model expressed in terms of predicates $\mc{P}^*$ and capabilities $\mc{C}$, and learned using the query-based autonomous capability estimation algorithm (Alg.~\ref{alg:qace}) just before line 10. Let $\mc{C}_N$ be a set of capability names corresponding to capabilities $\mc{C}$. If $\mc{P}^* \subseteq \mc{P}$, then the model $M^*$ is \emph{sound} w.r.t. the SDMA transition model $\mc{T}'$. Additionally, if $\mc{P}^* = \mc{P}$, then the model $M^*$ is \emph{complete} w.r.t. the SDMA transition model $\mc{T}'$.
\end{theorem}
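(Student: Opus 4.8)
The plan is to prove both claims by induction on the iterations of the main loop of Alg.~\ref{alg:qace} (the loop over $\langle l,p\rangle \in \langle L,\mc{P}^*\rangle$), maintaining the invariant that the partially learned model $M^*$ agrees with $\mc{T}'$ on exactly those $\langle$location, predicate$\rangle$ pairs already processed, restricted to the vocabulary $\mc{P}^*$. The base case is the empty model returned by \texttt{initializeModel}, which contains no literals and is therefore trivially sound. The crux of the inductive step is to argue that each iteration fixes the form of the current predicate $p$ at the current location $l$ to match $\mc{T}'$, while leaving every previously fixed literal untouched.

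For the inductive step I would first invoke Proposition~\ref{lem:last_action} together with the strong-solution guarantee (Lemma~\ref{ref:FOND_strong}) to establish that, for each of the three candidate pairs drawn from $\{M_T,M_F,M_I\}$, a distinguishing query $q$ exists whose policy is guaranteed to drive the two candidates to a state where they disagree, and whose final capability is exactly the capability $c$ associated with $l$. This is what makes the comparison in line 8 meaningful: the two candidates return logically inconsistent responses on $q$. I would then show the pruning is \emph{conservative}: a candidate is discarded only when its predicted response contradicts the SDMA's observed response $\theta_\agent$, and since the true form of $p$ at $l$ in $\mc{T}'$ is necessarily one of ``true'', ``false'', or ``ignored'', the candidate matching that form agrees with $\agent$ on every query and is never pruned. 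Conversely, each of the (at most two) incorrect candidates differs from the correct one precisely in the form of $p$ at $l$, so when paired against the correct candidate the distinguishing query exposes a disagreement with $\agent$ and the incorrect candidate is removed. Hence after the three pairwise comparisons the surviving model carries the $\mc{T}'$-correct form of $p$ at $l$, completing the step.

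Given the invariant, soundness when $\mc{P}^* \subseteq \mc{P}$ follows because every literal appearing in $M^*$ was fixed by some iteration and therefore matches $\mc{T}'$; $M^*$ never asserts a precondition or effect absent from $\mc{T}'$, and the predicates in $\mc{P}\setminus\mc{P}^*$ are simply missing from $M^*$. Consequently any transition $\tup{s,c,s'}$ consistent with $M^*$ can be lifted to a $\mc{T}'$-transition by completing the unconstrained predicates in $\mc{P}\setminus\mc{P}^*$ so as to satisfy $\mc{T}'$'s extra preconditions, giving $P_{\mc{T}'}(s'\mid s,c)>0$, i.e.\ soundness. For completeness I would additionally use $\mc{P}^* = \mc{P}$: now the loop ranges over \emph{every} predicate at every location, so by the invariant $M^*$ contains \emph{all} of $\mc{T}'$'s preconditions and effects with correct polarity, and every transition consistent with $\mc{T}'$ is reproduced by $M^*$.

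I expect the main obstacle to be the conservativeness-of-pruning argument, specifically discharging the executability condition it silently assumes: a distinguishing policy can only prune a pair when the SDMA actually executes it to the distinguishing state, whereas a policy may stall if the SDMA cannot apply some capability along the way. I would handle this by appealing to the existence of a state $s$ from which the relevant capability is executable (the directed-exploration pool of Sec.~\ref{sec:learningmodels}, and the executability assumption), and by arguing that re-rooting the query at such a state preserves both the strong-solution guarantee and Proposition~\ref{lem:last_action}, so the pruning stays valid. A secondary subtlety is confirming that later iterations, which build their candidates on top of the current $M^*$, cannot overturn a literal fixed earlier; this follows because each iteration varies a single $\langle l,p\rangle$ entry and the distinguishing query isolates that entry (again via Proposition~\ref{lem:last_action}).
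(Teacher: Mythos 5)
Your overall skeleton matches the paper's proof: a per-$\langle l,p\rangle$ argument that pruning is conservative (the candidate whose form of $p$ at $l$ matches $\mc{T}'$ is never discarded, the other two are eliminated when paired against it), followed by completeness-by-exhaustion since the loop in Alg.~\ref{alg:qace} ranges over all of $\{\emph{pre},\emph{eff}\}\times\mc{P}$ when $\mc{P}^*=\mc{P}$. Your handling of executability — re-rooting queries at states where the capability is known to execute — is also exactly the caveat the paper discharges via random traces and the state pool before its proof, and your appeal to Proposition~\ref{lem:last_action} and Lemma~\ref{ref:FOND_strong} for existence of distinguishing queries is consistent with the paper's machinery. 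Your inductive framing with an invariant is a harmless repackaging of the paper's case analysis.

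The genuine gap is in the effect half of your conservativeness step. You treat every comparison as single-shot logical inconsistency: a candidate is pruned when "its predicted response contradicts the SDMA's observed response," and you claim the correct candidate "agrees with $\agent$ on every query." That is right for precondition locations, where executability of $c$ in a fixed state is deterministic, but it fails for effect locations: effects are stochastic, so the correct candidate $M_T$ (with $p$ in some effect set) predicts $p$ only under one of several possible outcomes, and on any single execution the SDMA may sample a different effect set — your rule would then prune the \emph{true} model whenever a low-probability effect fails to materialize in the observed run. The paper's proof (Cases 2.1 and 2.2) avoids this with an asymmetric, repetition-based rule that your proposal never mentions: the policy is simulated $\eta$ times with $p$ as the goal formula $G$, $M_T$ is retained iff $p$ is observed to change in \emph{some} run, and identifiability of effects is invoked to attribute the observed change to a specific effect set (indeed the paper's proof of this theorem leans on identifiability even though that hypothesis is only stated for Theorem 2). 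Without this mechanism the inductive step is false as stated for half of the locations the theorem quantifies over, so you would need to split your pruning rule by location type rather than arguing uniformly.
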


\begin{proof}
    We first prove that given the predicates $P$, capability names $\mc{C}_H$, model of the agent $\mc{T}'$, and the model 
    $M^*$ learned by Alg.~\ref{alg:qace}, $M^*$ is sound w.r.t. the model $\mc{T}'$. We do this in two cases. The first one showing
    that the learned preconditions of all the capabilities in $M^*$ are sound, and the second one showing the same thing for learned effects. 
    We use $M_T$, $M_F$, and $M_I$ to refer to models corresponding to adding $p$ (true), $\emph{not(p)}$ (false), and not adding $p$ (ignored), respectively to model $M^*$.\\

    \noindent
    Case 1: Consider the location is precondition in a capability $c$ where we are trying to find the correct way to add a predicate $p \in \mc{P}$.
    
    \noindent
    Case 1.1:
    Let the models we are comparing be $M_T$ and $M_I$ (or $M_F$).
    The policy simulation query $q$ to distinguish between these models would involve executing $c$ in a state where $p$ is false.
    Now, $M_T$ would fail to execute $c$ (as it has $p$ as a positive precondition), and $M_I$ (or $M_F$) would successfully execute it.
    If $\agent$ can execute $c$ in such a state, we can filter out the model $M_T$. We can also remove $p$ from a state where $\agent$
    is known to execute $c$, and see if it can execute $c$. If not, we can filter out the model $M_I$ (or $M_F$). \\
    \noindent
    Case 1.2:
    Let the models we are comparing be $M_F$ and $M_I$.
    The policy simulation query $q$ to distinguish between these models would involve executing $c$ in a state where $p$ is true.
    $M_F$ would fail to execute $c$ as it has $p$ as a negative precondition, whereas $M_I$ would successfully execute it.
    If $\agent$ can execute $c$ in such a state, we can filter out the model $M_T$. We can also add $p$ to a state where $\agent$
    is known to execute $c$, and see if it can execute $c$. If not, we can filter out the model $M_I$. \\

    \noindent
    Case 2: Consider the location is effect in a capability $c$ where we are trying to find the correct way to add a predicate $p \in \mc{P}^*$.
    
    \noindent
    Case 2.1:
    Let the models we are comparing be $M_T$ and $M_I$ (or $M_F$). The policy simulation query $q$ used to distinguish between these 
    models would involve executing $c$ in a state where $p$ is false. After executing it, the resulting state will have $p$
    true according to $M_T$ only. We ask the agent to simulate the policy $N$ times, with $p$ as the goal formula $G$.
    If $p$ appears in any of the simulation after executing $c$, then we learn all the possible effects involving $p$. Not that the capability has identifiable effects, so if $p$
    appears in more than one effect, the corresponding effect will eventually be discovered when concretizing the predicate that uniquely identifies that effect.\\
    Case 2.2:
    Let the models we are comparing be $M_F$ and $M_I$. The policy simulation query $q$ used to distinguish between these 
    models would involve executing $c$ in a state where $p$ is true. After executing it, the resulting state will have $p$
    true according to $M_I$ only. We ask the agent to simulate the policy $\eta$ times, with $p$ as the goal formula $G$.
    If $p$ appears in any of the runs, then we learn all the possible effects involving $p$. Not that the capability has identifiable effects, so if $p$
    appears in more than one effect, the corresponding effect will eventually be discovered when concretizing the predicate that uniquely identifies that effect.

    Combining both cases, we infer that whenever we learn a precondition or effect, it is added in the same form as in the ground truth model $\mc{T}'$,
    hence the learned model $M^*$ is sound w.r.t. $\mc{T}'$.

    We now prove that given the predicates $\mc{P}$, capability names $\mc{C}_H$, model of the agent $\mc{T}'$, and the model 
    $M^*$ learned by Alg.~\ref{alg:qace}, $M^*$ is complete w.r.t. the model $\mc{T}'$. 
    We just showed that the model that we learn is sound as whenever we add a predicate in a precondition
    or effect, it is in correct mode. Now, since Alg.~\ref{alg:qace} loops over all possible combinations of
    predicates and capabilities, for both precondition and effect, we will learn all the preconditions
    and effects correctly. Hence, the learned model will be complete w.r.t. the agent model.
\end{proof}

\begin{theorem}
  Let $\agent$ be a black-box SDMA with a ground truth transition model $\mc{T}'$ expressible in terms of predicates $\mc{P}$ and a set of capabilities $\mc{C}$. Let $M$ be the probabilistic model expressed in terms of predicates $\mc{P}^*$ and capabilities $\mc{C}$, and learned using the query-based autonomous capability estimation algorithm (Alg.~\ref{alg:qace}). Let $\mc{P} = \mc{P}^*$ and $M$ be generated using a sound and complete non-deterministic model $M^*$ in line 11 of Alg.~\ref{alg:qace}, and let all effects of each capability $c \in \mc{C}$ be identifiable. The model $M$ is \emph{correct} w.r.t. the model  $\mc{T}'$ in the limit as $\eta$ tends to $\infty$, where $\eta$ is hyperparameter in query $Q_\emph{PS}$ used in Alg.~\ref{alg:qace}.
\end{theorem}

\begin{sproof}
    Thm.~\ref{thm:thm1} showed that the model learned by Alg.~\ref{alg:qace} is sound and complete, meaning all the preconditions and effects are correctly learned.
    Consider that each sample generated by asking an agent to follow a policy is i.i.d.
    Now, if we consider only the samples in which a capability is applied in a state such that its effects are identifiable effects, then 
    we can use MLE to learn the correct probabilities given infinite such samples. This is a direct consequence of the result that given
    infinite i.i.d. samples, probabilities learned by maximum likelihood estimation converge to the true probabilities~\citep{Kiefer1956Consistency}.
\end{sproof}

\section{Extended Empirical Evaluation}
\label{appendix:results}

As mentioned earlier, we used a single, small training problem with few objects ($\le 7$).
To demonstrate generalizability, our test set contained problems that had twice the number of objects than the training problem.
Increasing the number of objects causes an exponential increase in the problem size in terms of the state space.

For all the experiments, for each run of the experiment, we run QACE as well as the baselines from scratch. For the plots, we took snapshot of the learned models every 60 seconds and computed the variational distance using a fixed test dataset.

In addition to the experiments described in the main paper, we also performed
some additional experiments. These are explained and discussed below.
\begin{figure*}[t]
\includegraphics[width=\textwidth]{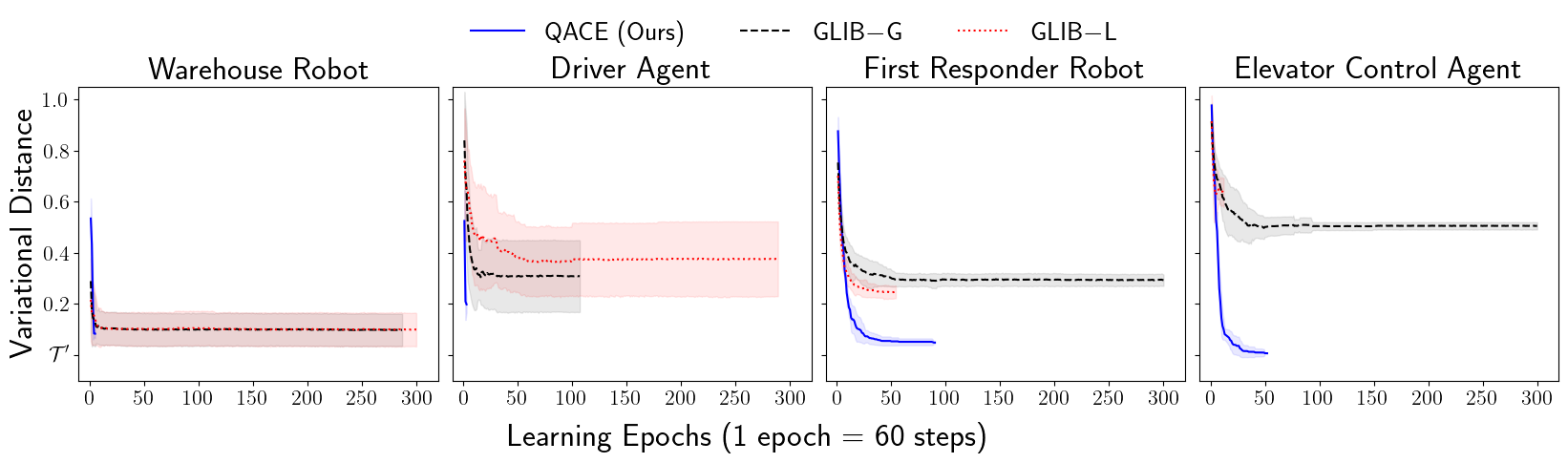}
\caption{Results showing the trends in the approximate Variational Distance w.r.t. the total number of steps in the environment (lower values better) for the three methods: QACE (ours), GLIB-G, and GLIB-L. Lines which do not extend until the end indicate that the time limit (4 hours) was exceeded.
The results were calculated using 30 runs per method per domain. Solid lines are averages across runs, and shaded portions show the standard deviation. $\mathcal{T}'$ is the ground truth model.}
\label{fig:step_plots}
\end{figure*}

\mysssection{Results w.r.t. environment steps} Fig.~\ref{fig:step_plots} show a comparison of the approximate variational distance
between QACE and the baselines as a factor of the total steps taken in the environment. From the results, it is clear that
QACE is able to outperform GLIB while taking far fewer steps in the environment. GLIB-L operates by babbling lifted goals and
we found that the goal babbling step of GLIB-L took an inordinate amount of time leading to very few steps in the environment
before the timeout of 4 hours. GLIB-G babbles grounded goals and thus can perform many steps but is not sample efficient in
learning as the results show. We analyzed the cause and found that if GLIB-G learns an incorrect model, it is often quite
difficult to get out of local minima since it keeps generating and following the same plan.

\begin{figure*}[t]
\includegraphics[width=\textwidth]{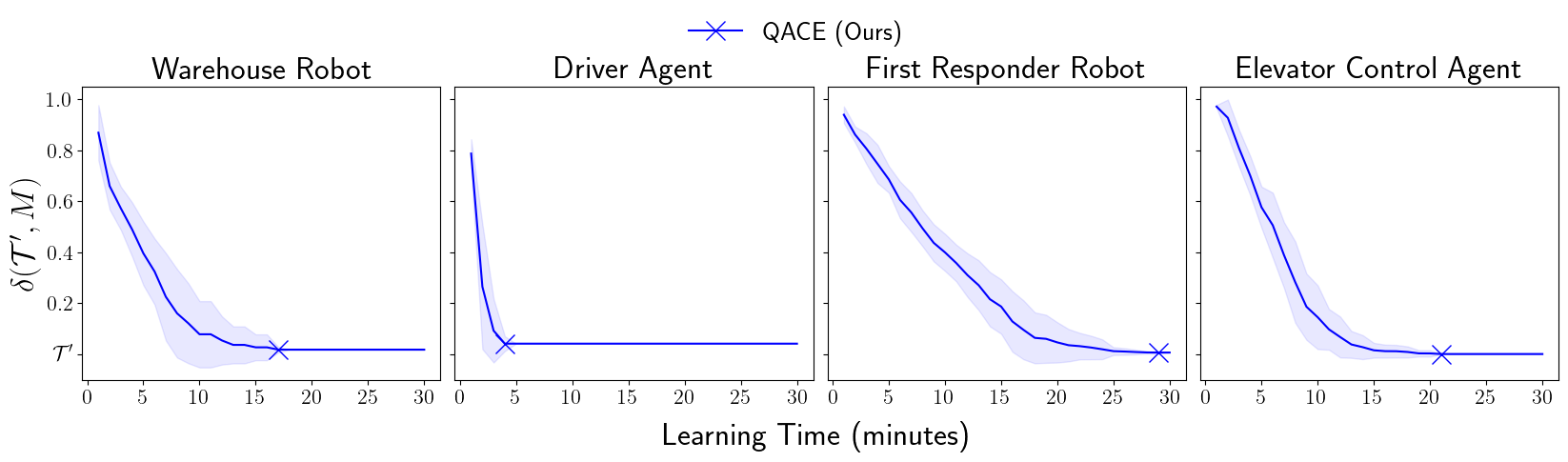}
\caption{Results showing the comparison of QACE w.r.t. the ground truth model $\mathcal{T}'$.
The plots show a trend in the variational distance (see Eq.~1) as a factor of the learning time for QACE (lower values better). {\color{blue} $\mathbf{\times}$} shows that the learning process ended
at that time instance for QACE.
The results were calculated using 30 runs per method per domain. Solid lines are averages across runs, and shaded portions show the standard deviation.}
\label{fig:gt_plots}
\end{figure*}

\mysssection{Evaluation w.r.t. ground truth models $\mathcal{T}'$} Fig.~\ref{fig:gt_plots} demonstrate that QACE is able to
converge to a learned model that is near-perfect compared to the ground truth model $\mathcal{T}'$.  QACE is able to learn
such a near-perfect model in a fraction of the time compared to the baselines (see Fig.~\ref{fig:plots} in the main paper). QACE can learn the
non-deterministic effects and preconditions in a finite number of representative environment interactions and given
enough samples MLE estimates are guaranteed to converge. This is in stark contrast to GLIB whose learned NDRs cannot
be easily compared to the ground truth.

\begin{figure}[ht]
    \centering
    \includegraphics[width=\textwidth]{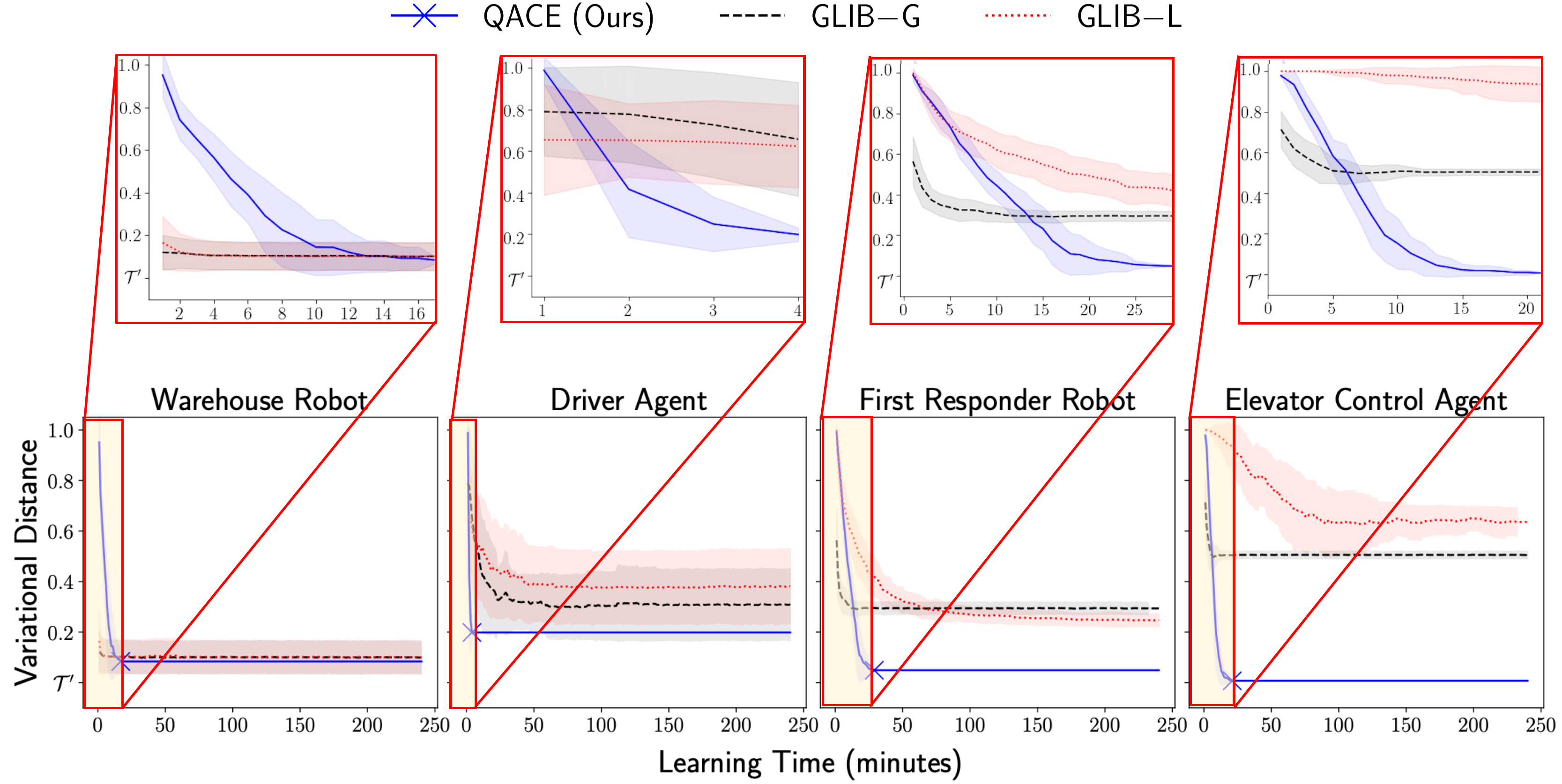}
    \caption{Results showing the comparison of QACE w.r.t. the ground truth model $\mathcal{T}'$.
The plots show a trend in the variational distance (see Eq.~1) as a factor of the learning time for QACE (lower values better). {\color{blue} $\mathbf{\times}$} shows that the learning process ended
at that time instance for QACE.
The results were calculated using 30 runs per method per domain. Solid lines are averages across runs, and shaded portions show the standard deviation. 
The zoomed in version shows the plots till learning process for QACE ends (marked using {\color{blue} $\mathbf{\times}$} in the zoomed-out plots). Note that QACE does not run beyond this.}
    \label{fig:zoomed-in}
\end{figure}

\mysssection{Faster convergence}
Fig.~\ref{fig:zoomed-in} shows a zoomed in version of the Fig.~\ref{fig:plots} in the main paper.  As you notice in the graph, the variational distance is very high initially, and it drops till the learning process of QACE ends (marked by {\color{blue} $\mathbf{\times}$} on the plots). We do not need to run QACE beyond this point and this time is short for all the domains. On the other hand, GLIB does not have a clear ending criterion. Hence we let it run for 4 hours and see that even with the extra time (and hence extra samples), it cannot learn a better model. The zoomed in plots also show that QACE does not learn the correct model in a one-shot manner, and that it actually keeps getting better with time as it processes more predicate and capability pairs.

\begin{figure}[t]
    \centering
    \includegraphics[width=\textwidth]{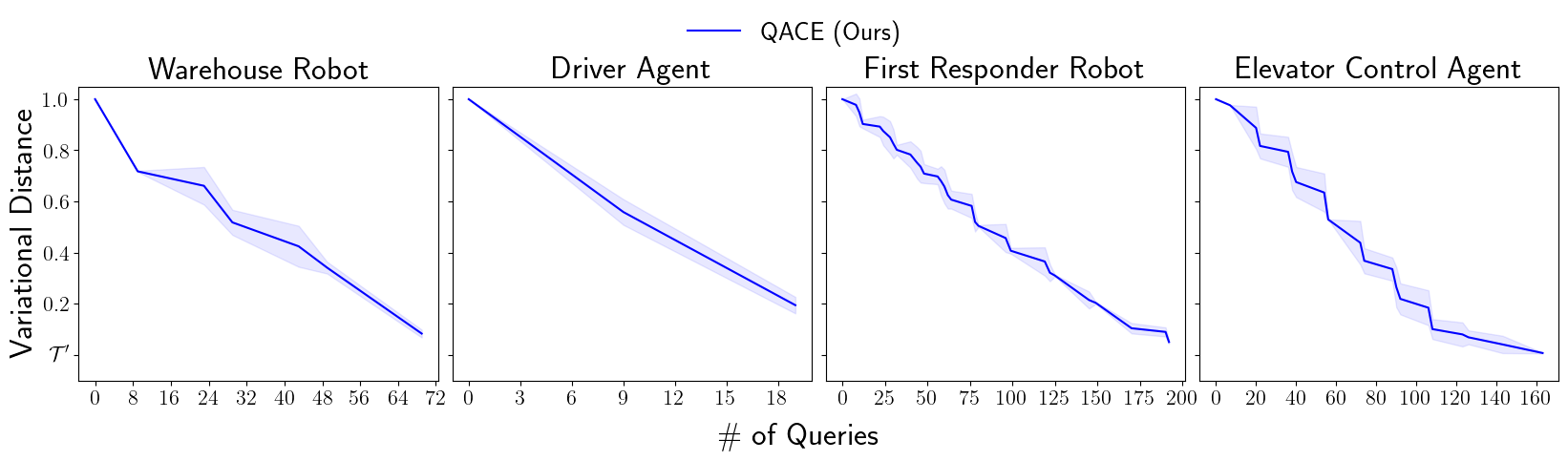}
    \caption{Results showing the avg. number of queries issued by QACE across 30 runs to achieve a specific variational distance (VD). Shaded regions represent one std. deviation. A VD of 0 (zero) corresponds to the ground truth model $\mathcal{T}'$.}
    \label{fig:query-count}
\end{figure}

\mysssection{Scalability} The number of queries needed to learn the model are linear in terms of the number of predicates and capabilities (\textit{for} loop in line 3 of Alg.~\ref{alg:qace}). The total number of queries for each domain shown in Fig.~\ref{fig:query-count} also correlates with the size of the domain shown in Tab.~\ref{tab:size}, supporting this hypothesis. Note that the \textit{for} loop in line 5 of Alg.~\ref{alg:qace} only contributes to a constant factor in the running time, as only three models are possible when adding a predicate at a location.

\end{document}